\DeclareMathOperator{\prob}{P}
\DeclareMathOperator{\E}{E}
\DeclareMathOperator{\Var}{Var}
\DeclareMathOperator{\diag}{diag}
\DeclareMathOperator{\Gmom}{GMoM}
\DeclareMathOperator{\Gmed}{GMed}
\DeclareMathOperator{\argmin}{argmin}
\DeclareMathOperator{\tr}{tr}
\DeclareMathOperator{\Vect}{vec}
\DeclareMathOperator{\linearspan}{span}
\newcommand\pr[1]{\prob\mleft(\,#1\,\mright)}
\newcommand\ex[2][]{\E_{#1}\mleft[\,#2\,\mright]}
\newcommand\gmed[1]{\Gmed\mleft(\,#1\,\mright)}
\newcommand\gmom[2]{\Gmom_{#1}\mleft(\,#2\,\mright)}
\newcommand\cprob[2]{\prob\mleft(\,#1\;\middle|\;#2\,\mright)}
\newcommand\vect[1]{\Vect\left(\,#1\,\right)}
\newcommand\xvect[1]{\mathbf{x}^{(#1)}}
\newcommand\xrandvect{\mathbf{x}}
\newcommand\Gammanaught{\mathbf{\Gamma}_0}
\newcommand\gnaught{\mathbf{g}_0}
\newcommand{\vertiii}[1]{{\left\vert\kern-0.15ex\left\vert\kern-0.15ex\left\vert #1 \right\vert\kern-0.15ex\right\vert\kern-0.15ex\right\vert}}
\newcommand{\bs}[1]{\boldsymbol{#1}}
\newtheorem{theorem}{Theorem}[section]
\newtheorem*{theorem*}{Theorem} 
\newtheorem{corollary}[theorem]{Corollary}
\newtheorem{lemma}[theorem]{Lemma}
\newtheorem{definition}[theorem]{Definition}
\newtheorem{example}[theorem]{Example}
\begin{document}

\title{Robust Score Matching}

\author*[1]{\fnm{Richard} \sur{Schwank}}\email{richard.schwank@tum.de}

\author[2]{\fnm{Andrew} \sur{McCormack}}\email{mccorma2@ualberta.ca}

\author[1,3]{\fnm{Mathias} \sur{Drton}}\email{mathias.drton@tum.de}

\affil*[1]{\emph{\orgdiv{School of Computation, Information and Technology}}, \orgname{Technical University of Munich},
\country{Germany}}

\affil[2]{\emph{\orgdiv{Department of Mathematical and Statistical Sciences}}, \orgname{University of Alberta}, 
\country{Canada}}

\affil[3]{\emph{\orgdiv{Munich Center for Machine Learning}}, \city{Munich}, \country{Germany}}

\abstract{
  Proposed in \citet{HyvaraninScoreM}, score matching is a parameter estimation procedure that does not require computation of distributional normalizing constants. In this work we utilize the geometric median of means to develop a robust score matching procedure that yields consistent parameter estimates in settings where the observed data has been contaminated. A special appeal of the proposed method is that it retains convexity in exponential family models. The new method is therefore particularly attractive for non-Gaussian, exponential family graphical models where evaluation of normalizing constants is intractable. Support recovery guarantees for such models when contamination is present are provided. Additionally, support recovery is studied in numerical experiments and on a precipitation dataset. We demonstrate that the proposed robust score matching estimator performs comparably to the standard score matching estimator when no contamination is present but greatly outperforms this estimator in a setting with contamination.
}

\maketitle

\section{Introduction}
\label{sec:intro}

Detecting and mitigating the influence of outliers or contaminated observations in multivariate data is a challenging task \citep{MaronnaRobustStatistics},  particularly in high-dimensional settings where there are many possible ways in which an observation can be deemed an outlier and where computational considerations play an important role  \citep{Diakonikolas2023HighDimRobustStats}.
In this work we take up the problem of designing robust estimators of high-dimensional joint densities from exponential family models. The solution we propose is a robustified version of score matching, developed with the help of a carefully chosen multivariate median-of-means technique.

 An exponential family consists of a collection of probability distributions that have densities of the form
 \begin{align}
\label{eqn:expfamdens}
     p(\xrandvect|\bs{\theta}) = \exp\left( \bs{\theta}^\top \mathbf{t}(\xrandvect) - a(\bs{\theta}) + b(\xrandvect)  \right), \quad \xrandvect\in\mathcal{X}.
\end{align}
The parameter $\bs{\theta}$ ranges over the natural parameter space $\Omega$, which is comprised of points where the integral  
$\int_{\mathcal{X}} \exp(\bs{\theta}^\top \mathbf{t}(\xrandvect) + b(\xrandvect)) d\xrandvect = \exp(a(\bs{\theta}))$ is finite. Aside from very special cases like Gaussian models, estimating the parameter $\bs{\theta}$ in \eqref{eqn:expfamdens} by maximum likelihood is not feasible: in general models \citep{SunRegScoreMatchingNeurips, Roy2020CountGraphicalModelIntractableNormalizingConstant} the normalizing constant $\exp(a(\bs{\theta}))$ does not have a closed-form expression and must be found by expensive numerical integration, a problem that is exacerbated by the fact that  maximizing the likelihood typically requires iterative optimization procedures. 

Score matching (SM), proposed by \citet{HyvaraninScoreM}, is an estimation procedure that avoids the aforementioned shortcomings of maximum likelihood estimation in exponential families for continuous data. It does not require that the normalizing constant $\exp(a(\bs{\theta}))$ be known. Moreover, score matching amounts to minimizing a convex, quadratic loss function in $\bs{\theta}$, a task that is easily solved even when $\bs{\theta}$ is high-dimensional.  One prominent application of score matching is estimation of non-Gaussian graphical models \citep{YuScoreMatchNonNeg} that may be formed by structuring the  sufficient statistics $\mathbf{t}(\xrandvect)$ in~\eqref{eqn:expfamdens} so that the vanishing of components of $\bs{\theta}$ correspond to conditional independence relations
between variables \citep{LauritzenGraphicalModels}. 

The central contribution of this work is to extend the score matching methodology to handle data that has been corrupted or contains outliers. Specifically, we propose using the geometric median of means (GMoM) \citep{MinskerGeometricMoM} to robustify the quadratic empirical loss function for exponential family score matching.  Crucially, by using the geometric median of means the robustified objective function remains convex. This property is in general not preserved by other multivariate medians, such as the componentwise median.
The GMoM interpolates between the mean and geometric median of data points, with a block-size parameter determining the relative proximity of the GMoM to the mean and geometric median. By 
altering the block-size parameter we show how the proposed method can be tuned to handle different levels of corruption.

Our paper is structured as follows. Sections~\ref{subsec:scorematching}-\ref{subsec:gmom} review score matching and the geometric median of means, respectively. Section \ref{sec:robustsm} details the proposed robust score matching procedure, provides theoretical results on robustness against contamination, and discusses hyperparameter tuning. Section \ref{sec:highDimGraphModel} introduces an $\ell_1$-regularized version of  robust score matching for graphical models and presents a support recovery guarantee under corruption. Simulations in Section \ref{sec:experiments} illustrate the efficacy of robust score matching. This section concludes by applying the proposed procedure to a data set on precipitation in the Alps. Proofs of all theorems can be found in the appendix.

\textbf{Notation:} Scalars are denoted by $\alpha$, $x$, etc., vectors by $\xrandvect$, $\bs{\theta}$, etc., and matrices by $\mathbf{X}, \mathbf{\Theta}$, etc. Subscripts $X_{ij}$ and $x_i$ indicate matrix and vector components, while the superscripts on $\xvect{i}$ index different observations in a random sample. The gradient is denoted by $\nabla=(\partial_1,\ldots,\partial_p)$ and $\partial_{jj}$ denotes the second partial derivatives with respect to an argument $x_j$. Important vector norms are the Euclidean norm $\|\cdot\|_2$, the Manhattan norm $\|\cdot\|_1$, and the maximum norm $\|\cdot\|_\infty$. For a matrix $\mathbf{A}$, $\vertiii{\mathbf{A}}_{\infty,\infty}=\max_{i=1,\ldots,a}\sum_{j=1}^b |A_{ij}|$, and $\tr(\mathbf{A})$ and $\diag(\mathbf{A})$ are the trace and the diagonal part, respectively. Finally, $1\{b\}$ is the indicator function that equals $1$ if the boolean $b$ is true and zero otherwise.

\section{Preliminaries}
\label{sec:prelims}

\subsection{Generalized score matching}
\label{subsec:scorematching}
In this section, we review the main aspects of score matching in the generalized form introduced by \citet[Sect.~2]{YuScoreMatchNonNeg}.

Suppose that $n$ i.i.d.~observations $\xvect{1},\ldots,\xvect{n} \in \mathbb{R}^m$ are sampled from a distribution in an $r$-dimensional exponential family $\mathcal{P} = \{P_{\bs{\theta}}:\bs{\theta} \in \Omega \subseteq \mathbb{R}^r\}$ with densities of the form \eqref{eqn:expfamdens}. In our use cases, the densities are considered with respect to Lebesgue measure on $\mathcal{X}=\mathbb{R}^m$ or $\mathcal{X}=[0,\infty)^m$. Write $\mathbf{X}$ for the $n\times m$ matrix of all observations.  For practical consideration of general families $\mathcal{P}$, it is crucial to be able to obtain an estimate of the true parameter value $\bs{\theta}_0$ that does not require the computation of the normalizing constant $\exp(-a(\bs{\theta}))$. This may be achieved via the generalized score matching estimator, which minimizes an empirical loss function that approximates the loss function  
\begin{equation}
\label{eqn:scorematloss}
  J_h(\bs{\theta}) =  \frac{1}{2}\int_{\mathcal{X}} \Vert \nabla_\mathbf{x} \log(p(\mathbf{x}|\bs{\theta})) \circ \mathbf{h}(\mathbf{x})^{1/2} -
  \nabla_\mathbf{x} \log(p(\mathbf{x}|\bs{\theta}_0)) \circ \mathbf{h}(\mathbf{x})^{1/2} \Vert_2^2 \; p(\mathbf{x}|\bs{\theta}_0) \mathrm{d}\mathbf{x},
\end{equation}
where gradient $\nabla_\mathbf{x}$ is taken with respect to the data  $\mathbf{x}$, $\circ$ is the componentwise product of vectors, and $\mathbf{h}(\mathbf{x})^{1/2} = (h_1(\mathbf{x})^{1/2},\ldots,h_m(\mathbf{x})^{1/2})$ comprises the square roots of $m$ non-negative weight functions.  Nontrivial weighting is needed when the support $\mathcal{X}$ is constrained.  The loss $J_h(\bs{\theta})$ in \eqref{eqn:scorematloss} is the expected weighted squared distance between the true score function and the score function $\nabla_\mathbf{x} \log(p(\mathbf{x}|\bs{\theta}))$ at $\bs{\theta}$.  Under mild conditions on $\mathbf{h}$, the unique minimizer of \eqref{eqn:scorematloss} is $\bs{\theta}_0$ \citep[Prop 2]{YuScoreMatchNonNeg}.

A key property of $J_h(\bs{\theta})$ is that after an integration by parts, $J_h(\bs{\theta})=\tfrac{1}{2}\bs{\theta}^\top\Gammanaught\bs{\theta} - \gnaught^\top\bs{\theta}$ up to constants not depending on $\bs{\theta}$, with $\Gammanaught=\ex[\bs{\theta}_0]{\mathbf{\Gamma}(\xrandvect)}$ and $\gnaught=\ex[\bs{\theta}_0]{\mathbf{g}(\xrandvect)}$ where 
\begin{align}
\label{eqn:GamFormula}
    \mathbf{\Gamma}(\xrandvect) =& \sum_{j = 1}^m h_j(\xrandvect)\partial_j\mathbf{t}(\xrandvect) \partial_j\mathbf{t}(\xrandvect)^\top \in \mathbb{R}^{r \times r},
    \\
    \label{eqn:gFormula}
    \mathbf{g}(\xrandvect) =& -\sum_{j=1}^m \big(h_j(\xrandvect)\partial_j b(\xrandvect) \partial_j \mathbf{t}(\xrandvect) + \\
    \notag
    & h_j(\xrandvect) \partial_{jj}\mathbf{t}(\xrandvect) + \partial_j h_j(\xrandvect) \partial_j\mathbf{t}(\xrandvect)\big) \in \mathbb{R}^r.
\end{align}

The empirical loss $J_h(\bs{\theta}; \mathbf{X})$ replaces $\Gammanaught$ by $\overline{\mathbf{\Gamma}}(\mathbf{X}):=\frac{1}{n}\sum_{i=1}^n \mathbf{\Gamma}(\xvect{i})$ and $\gnaught$ by $\overline{\mathbf{g}}(\mathbf{X}):=\frac{1}{n}\sum_{i=1}^n \mathbf{g}(\xvect{i})$. The resulting score matching estimator is
\begin{align}
\label{eqn:emp_sm}
    \hat{\bs{\theta}}(\mathbf{X}) := \underset{\bs{\theta}\in\Omega}{\argmin}\; J_h(\bs{\theta}; \mathbf{X}) 
    = \overline{\mathbf{\Gamma}}(\mathbf{X})^{-1}\overline{\mathbf{g}}(\mathbf{X}).
\end{align}
The score matching estimator $\hat{\bs{\theta}}$ depends on the weighting function $\mathbf{h}$ through $\mathbf{\Gamma}$ and $\mathbf{g}$. When the sample space $\mathcal{X}$ is $\mathbb{R}^m$ the constant weighting $\mathbf{h}(x) = (1,\ldots,1)$ can be used. When $\mathcal{X}$ has a boundary, a suitable choice of $\mathbf{h}$ can dampen boundary effects to ensure that the integration by parts argument is valid \citep{HyvarinenExtensions,YuScoreMatchNonNeg}. 

Under mild regularity conditions, $\hat{\bs{\theta}}$ consistently estimates $\bs{\theta}_0$ as $n \rightarrow \infty$. In high-dimensional scenarios where $r > n$, the matrix $\overline{\mathbf{\Gamma}}(\mathbf{X})$ is not invertible and the score matching estimator does not exist. To handle such settings, \citet{YuScoreMatchNonNeg} modify the objective function $J_h(\bs{\theta}; \mathbf{X})$ in \eqref{eqn:emp_sm} by adding positive offsets to the diagonal entries of $\overline{\mathbf{\Gamma}}(\mathbf{X})$.

\begin{example}[Square Root Graphical Model]\label{example:sqr}
Consider non-negative data with $\mathcal{X}=[0,\infty)^m$.  
The square root graphical model is parametrized by a pair $\bs{\theta}=(\mathbf{\Theta}, \bs{\eta})$ with $\mathbf{\Theta}\in\mathbb{R}^{m\times m}$ and $\bs{\eta}\in\mathbb{R}^m$, and has densities
\begin{equation}
    p(\xrandvect|\bs{\theta}) = \exp\bigg(- \sum_{i=1}^m \Theta_{ii} x_i
    + \sum_{1\leq i < j \leq m} 2\Theta_{ij}x_i^{1/2}x_j^{1/2} + 
    \sum_{i=1}^m 2\eta_i x_i^{1/2} - a(\bs{\theta})\bigg)
\end{equation}
with respect to Lebesgue measure \citep{InouyeSquareRootModel}. This is an example of a pairwise interaction model \citep{Mingyu2016statistical} where the interaction parameter $\Theta_{ij}$ represents the degree of dependence
between $x_i$ and $x_j$ conditionally on all of the other components. In particular, if $\Theta_{ij} = 0$ then $x_i$ is conditionally independent of $x_{j}$ given $\{x_k: k \neq i,j\}$.  In the formalism of graphical models, these independencies can be expressed in an undirected graph \citep{handbook}. 
The normalizing constant in this model is intractable, making score matching an attractive approach.
\end{example}

\subsection{Geometric median of means and robustness}
\label{subsec:gmom}

Recent literature has popularized the (univariate) median-of-means (MoM) as a robust mean estimator \citep{DevroyeGMoMNotSubGaus,LaforgueMoMcorrupt}.
The geometric median of means (GMoM) is a multivariate generalization of the MoM \citep{MinskerGeometricMoM}.

\begin{definition}\label{def:gmom}
    The GMoM of $\xvect{1},\ldots, \xvect{n}\in\mathbb{R}^p$ with $K$ (a divisor of $n$) blocks is defined as 
    \begin{equation*}\gmom{K}{\xvect{1},\ldots, \xvect{n}}:=\gmed{\hat{\bs{\mu}}^{(1)}, \ldots, \hat{\bs{\mu}}^{(K)}}, 
    \end{equation*} where $\hat{\bs{\mu}}^{(j)}$ is the sample mean of $\xvect{(j-1)K + 1},\ldots, \xvect{jK}$ and 
    $\Gmed$ denotes the \emph{geometric median} defined as 
    \begin{align}\label{eqn:GmedDef}
    \hspace{-.1cm}\gmed{\hat{\bs{\mu}}^{(1)}, \ldots, \hat{\bs{\mu}}^{(K)}} = \underset{\mathbf{m} \in \mathbb{R}^p}{\argmin} \sum_{i = 1}^K \Vert \hat{\bs{\mu}}^{(i)} - \mathbf{m} \Vert.
\end{align}
\end{definition}
When $p=1$, the GMoM reduces to the MoM, because the geometric median of real numbers equals the standard median. In this case, the MoM partitions observations into $K$ blocks, computes the sample mean within each block, and then aggregates the block means by taking a sample median.  Hence, the MoM is an interpolation between the sample mean ($K=1$) and the sample median ($K=n$). For intermediate values of $K$, the MoM inherits robustness properties of the geometric median while also being an approximately unbiased estimator of the population mean. If the block-sizes of the MoM are increasing, asymptotically the MoM is a consistent estimator of the population mean \citep[Sect.~2.5]{MinskerMoMReview}. An advantage of the MoM over the mean is that if the moment generating function of the population distribution does not exist the sample mean will concentrate around the mean at a polynomial rate, whereas the MoM achieves sub-Gaussian concentration when second moments exist; see \citet{LugosiMeanEstimationHeavyTails} for an in-depth discussion.

If the ambient dimension $p$ is larger than one, the GMoM inherits concentration properties and robustness against outliers from its univariate counterpart, as shown in Section \ref{sec:robustsm}. We consider outliers originating from the contamination of entire observations, also referred to as \emph{rowwise} corruption. See Section \ref{subsec:contamination_assumptions} for details. A basic quantity to assess robustness against rowwise corruption is the \emph{breakdown point} \citep{LopuhaBreakdownPoints}. It is the minimal proportion of observations that, if tampered with arbitrarily, can force the estimator to diverge to infinity. 

In principle, one could generalize the MoM to higher dimensions using any multivariate median concept \citep[see the survey of][]{SmallMedianSurvey} instead of the geometric median. However, subleties arise for our later application in robust score matching as we seek robust estimates of a collection of many means that feature in a loss that ought to admit a well-defined minimizer.  For this reason, a candidate median concept for robust score matching should satisfy the following properties:
 \begin{itemize}
    \item[\textbf{(R1)}] The median should be a convex combination of its arguments. This is to ensure that the median of positive semidefinite matrices is again positive semidefinite, which is needed for applying the GMoM to $\mathbf{\Gamma}(\xrandvect)$ (Section \ref{sec:robustsm}).
    \item[\textbf{(R2)}] Computation should be feasible in high dimensions. This is because the number of parameters in a graphical model scales quadratically with the number of nodes.
    \item[\textbf{(R3)}] The median should have a high breakdown point against rowwise contamination.
\end{itemize}

Many high-dimensional estimation problems can be addressed surprisingly well by seemingly simple coordinate-wise procedures.  However, our argument against a componentwise median is that it fails to satisfy \textbf{(R1)}.  In practice, this entails (robustly) estimated loss functions that end up being unbounded below, with no associated score matching estimator.  In contrast, the geometric median $\mathbf{m}$, if it does not equal one of its arguments $\hat{\bs{\mu}}^{(1)}, \ldots, \hat{\bs{\mu}}^{(K)}$, can be rewritten as \begin{align}\label{eqn:GmedIteration}
\mathbf{m} = 
\frac{1}{\sum_{i=1}^K 1/\|\mathbf{m} - \hat{\bs{\mu}}^{(i)}\|_2}\sum_{i=1}^K \frac{\hat{\bs{\mu}}^{(i)}}{\|\mathbf{m} - \hat{\bs{\mu}}^{(i)}\|_2} \end{align} by setting gradient with respect to $\mathbf{m}$ in \eqref{eqn:GmedDef} to zero. The GMed thus satisfies \textbf{(R1)}.

Regarding the computational requirement \textbf{(R2)}, note that while there is not a closed-form solution for the geometric median, equation \eqref{eqn:GmedIteration} immediately suggests a fixed point algorithm called \emph{Weiszfeld's algorithm}. The computational complexity of a single iteration step is only $\mathcal{O}(p K)$. In contrast, other well-known multivariate medians often have exponential complexity in the ambient dimension $p$; see \citet{RonkainenOjaMedComplexity} for the \emph{Oja median} and \citet{LiuTukeyMedComplexity} for the \emph{Tukey median}. Convergence of Weiszfeld's algorithm is guaranteed under slight modifications that prevent getting stuck on the input vectors \citep{VardiWeiszfeldImproved}.

Lastly, the geometric median satisfies \textbf{(R3)} as its breakdown point is $\lfloor (K+1)/2\rfloor / K$ \citep{LopuhaBreakdownPoints}, the same as that of the univariate median. In contrast, the breakdown point of the Oja median tends to zero with the sample size $n$ \citep{NiinimaaOja0Bd}. It is between $1/3$ and $1/(p+1)$ for the Tukey median \citep{DonohoBreakdownP}.

While our approach uses the GMoM to obtain a robust aggregate, we would like to mention that alternative frameworks for this problem exist, e.g. distributionally robust optimization \cite{blanchet2024distributionallyrobustoptimizationrobust, kuhn2024distributionallyrobustoptimization}.

\section{Robust score matching for contaminated data}
\label{sec:robustsm}
This section introduces a generalization of the score matching estimator $\hat{\bs{\theta}}$ from \eqref{eqn:emp_sm} and investigates its robustness against contamination. 

\subsection{Contamination assumptions}\label{subsec:contamination_assumptions}
In the classical Tukey-Huber contamination model \citep[Sect. 2.2]{MaronnaRobustStatistics}, the observed vector $\mathbf{y}\in\mathbb{R}^p$ equals $\mathbf{y}=(\mathbf{I} - \mathbf{B})\mathbf{x} + \mathbf{B}\mathbf{z}$, where $\mathbf{x}$ is the uncorrupted observation, $\mathbf{z}$ is a random contamination vector, $\mathbf{I}$ is the $p\times p$ identity matrix, and $\mathbf{B}$ is a diagonal matrix, either being $\mathbf{I}$ with probability $\varepsilon>0$ or the zero-matrix otherwise. In a data frame where rows are observations, under the Tukey-Huber model any row is either corrupted or not, and thus this is a form of \emph{rowwise} corruption.  

In this paper, we consider rowwise contamination, however, we do not  require that the contamination occurs at random like in the Tukey-Huber model. Instead, we assume that a proportion $\varepsilon$ of the rows could have been altered arbitrarily. This includes \emph{adversarial} contamination by an intelligent attacker; see, e.g., \citet{BhattAdversarialContamination}.
We note that yet other forms of corruption could be considered in future work; compare, e.g., the \emph{cellwise} contamination treated by  \citeauthor{AlqallafCellwiseCorruption}, \citeyear{AlqallafCellwiseCorruption}.

\subsection{A robust estimator based on the GMoM}

The classical score matching estimator $\hat{\bs{\theta}}$ from \eqref{eqn:emp_sm} minimizes $\tfrac{1}{2}\bs{\theta}^\top \overline{\mathbf{\Gamma}}(\mathbf{X}) \bs{\theta} - \bs{\theta}^\top \overline{\mathbf{g}}(\mathbf{X})$. We propose to replace $\overline{\mathbf{\Gamma}}(\mathbf{X})$ and $\overline{\mathbf{g}}(\mathbf{X})$ with a more robust version using the GMoM. In symbols, we set
\begin{equation}\label{eq:estimator:def_parts}
    \begin{aligned}
    \hat{\mathbf{\Gamma}}_K(\mathbf{X}) &:= \gmom{K}{\mathbf{\Gamma}(\xvect{1}),\ldots, \mathbf{\Gamma}(\xvect{n})}, \\
    \hat{\mathbf{g}}_K(\mathbf{X}) &:= \gmom{K}{\mathbf{g}(\xvect{1}),\ldots, \mathbf{g}(\xvect{n})}.
\end{aligned} 
\end{equation}
Note that when applying the GMoM each $\mathbf{\Gamma}(\xvect{i})\in\mathbb{R}^{r\times r}$ is interpreted as a vector in $\mathbb{R}^{r^2}$. When the parameter $K$ for the number of blocks equals one, $\hat{\mathbf{\Gamma}}_K(\mathbf{X})$ and $\hat{\mathbf{g}}_K(\mathbf{X})$ reduce to $\overline{\mathbf{\Gamma}}(\mathbf{X})$ and $\overline{\mathbf{g}}(\mathbf{X})$. For $K>1$, the equal weights $1/n$ are replaced by non-negative weights that sum to one in which block means that contain outliers are downweighted, as shown in~\eqref{eqn:GmedIteration}.
We then propose the estimator 
\begin{align}\label{eqn:robust_sm_def}
    \hat{\bs{\theta}}(K) := \underset{\bs{\theta}\in\Omega}{\argmin}\; \tfrac{1}{2}\bs{\theta}^\top \hat{\mathbf{\Gamma}}_K(\mathbf{X}) \bs{\theta} - \bs{\theta}^\top \hat{\mathbf{g}}_K(\mathbf{X}),
\end{align}
which exists uniquely if and only if $\hat{\mathbf{\Gamma}}_K(\mathbf{X})$
is positive definite, in which case $\hat{\bs{\theta}}(K)=\hat{\mathbf{\Gamma}}_K^{-1}(\mathbf{X})\hat{\mathbf{g}}_K(\mathbf{X})$.  
In the classical problem \eqref{eqn:emp_sm}, $\overline{\mathbf{\Gamma}}(\mathbf{X})$ looks very similar to a sample covariance matrix, which would be almost surely positive definite when the sample size $n$ exceeds the ambient dimension $m$ \citep{eaton:perlman:1973}.
Similarly, a sufficient sample size guarantees the positive definiteness of $\hat{\mathbf{\Gamma}}_K(\mathbf{X})$ under mild regularity conditions on the sufficient statistic $\mathbf{t}$, as detailed in the appendix. It is this guarantee of positive definiteness that stems from the use of the geometric median over conceptually and computationally simpler methods like the componentwise median. 

\subsection{A first robustness guarantee under contamination}\label{subsec:robust_guarant_basic_sm}

We now show that the robust score matching estimator from \eqref{eqn:robust_sm_def} consistently estimates the true parameter $\bs{\theta}_0$ even if a part of the observations are contaminated as described in Section~\ref{subsec:contamination_assumptions}. We begin by deriving a concentration result of the GMoM around the population mean under corruption, which is also useful when we consider sparse graphical models in the next section.

\begin{theorem}\label{thm:gmom_concentration}
    Let $\xvect{1},\ldots,\xvect{n}\in\mathbb{R}^p$ be independent samples from a $p$-dimensional distribution with mean $\bs{\mu}$ and variance $\mathbf{\Sigma}$. Fix a confidence level of $0<\delta\leq 1$.  We allow for up to $(\lfloor 17\cdot\log(1/\delta)\rfloor + 1)\tau$ samples to be arbitrarily corrupted, where $0\leq\tau< 1/2$. There exists functions $k(\tau)=\mathcal{O}(1/(\tfrac{1}{2} - \tau)^{2})$ and $c(\tau)=\mathcal{O}(1/(\tfrac{1}{2} - \tau)^{2.5})$ as $\tau\to\tfrac{1}{2}$ such that when the number of blocks $K$ defined as $K=K(\delta, \tau) :=\lfloor k(\tau)\cdot \log(1/\delta)\rfloor + 1$ satisfies $K\leq n/2$, it holds that  \begin{equation}\label{eqn:thm_gmom_concentration}
        \prob\biggl( \|\gmom{K}{\xvect{1},\ldots,\xvect{n}} - \bs{\mu}\|_2 >
        c(\tau)\sqrt{\log\left(\frac{4}{(1-\tau)^2}\frac{1}{\delta}\right)\frac{\tr(\mathbf{\Sigma})}{n}}\biggr) \leq \delta.
    \end{equation}
\end{theorem}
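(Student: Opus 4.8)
The plan is to run the classical median-of-means argument, but with the geometric median in place of the scalar median and with the block-counting adapted to adversarial rather than random corruption. The three ingredients are a second-moment bound on a single block mean, a counting argument that guarantees a strict majority of ``good'' blocks despite corruption, and the deterministic robustness property of the geometric median.

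First I would split the sample into $K$ disjoint blocks of size $n/K$ and study one clean block mean $\hat{\bs{\mu}}^{(j)}$. Independence and $\Cov(\xvect{i}) = \mathbf{\Sigma}$ give $\ex{\|\hat{\bs{\mu}}^{(j)} - \bs{\mu}\|_2^2} = K\tr(\mathbf{\Sigma})/n$, so Markov's inequality yields $\pr{\|\hat{\bs{\mu}}^{(j)} - \bs{\mu}\|_2 > r} \leq K\tr(\mathbf{\Sigma})/(nr^2)$ for any $r > 0$. Fixing this probability at a constant $p_0 = p_0(\tau) < 1/2$ sets the radius scale $r \asymp \sqrt{K\tr(\mathbf{\Sigma})/n}$; once $K \asymp k(\tau)\log(1/\delta)$ is substituted this already reproduces the $\sqrt{\log(1/\delta)\,\tr(\mathbf{\Sigma})/n}$ factor appearing in \eqref{eqn:thm_gmom_concentration}.

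Next I would call a block \emph{bad} if it contains a corrupted observation or if its mean lies farther than $r$ from $\bs{\mu}$, and bound the number of bad blocks. Since the blocks are disjoint, each corrupted sample spoils at most one block, so at most $(\lfloor 17\log(1/\delta)\rfloor + 1)\tau$ blocks are corrupted. On the remaining clean blocks the events $\{\|\hat{\bs{\mu}}^{(j)} - \bs{\mu}\|_2 > r\}$ are independent and each has probability at most $p_0$, so a Chernoff bound controls their number with failure probability at most $\delta$. The point of the definition $K = \lfloor k(\tau)\log(1/\delta)\rfloor + 1$ is precisely to make this work: the Chernoff exponent is proportional to $K$ times the square of the gap between $1/2$ and the total bad-block fraction, and since adversarial corruption narrows this gap at a rate set by $\tfrac{1}{2} - \tau$, one is forced to take $k(\tau) = \mathcal{O}((\tfrac{1}{2}-\tau)^{-2})$ blocks so that the exponent stays $\gtrsim \log(1/\delta)$ and the bad-block fraction provably stays below some $\alpha = \alpha(\tau) < 1/2$.

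On the resulting high-probability event more than a $(1-\alpha)$-fraction of the block means lie within $r$ of $\bs{\mu}$, so I would invoke the deterministic concentration lemma for the geometric median (cf.\ \citet{MinskerGeometricMoM}): whenever more than a $(1-\alpha)$-fraction of points lie within $r$ of a center, their geometric median lies within $C_\alpha\, r$ of it, with $C_\alpha = 2(1-\alpha)/(1-2\alpha)$. Substituting $r \asymp \sqrt{k(\tau)\log(1/\delta)\,\tr(\mathbf{\Sigma})/n}$ and folding $C_\alpha$, $\sqrt{k(\tau)}$ and $1/\sqrt{p_0}$ into the prefactor gives $c(\tau) = \mathcal{O}((\tfrac{1}{2}-\tau)^{-2.5})$, while the rounding in $K$ together with the clean-block fraction $1-\tau$ is what turns the bare $\log(1/\delta)$ into the factor $\log(4(1-\tau)^{-2}\delta^{-1})$ inside the square root. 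The main obstacle I expect is exactly this constant bookkeeping: one must choose $p_0(\tau)$, the margin $\alpha(\tau)$, and $k(\tau)$ jointly so that the corrupted-block budget plus the Chernoff-controlled far-block count provably stays below $K/2$ uniformly over $0 \leq \tau < 1/2$, and then track how these $\tau$-dependent choices propagate through $C_\alpha$ and $r$ to yield the sharp exponents $(\tfrac{1}{2}-\tau)^{-2}$ and $(\tfrac{1}{2}-\tau)^{-2.5}$ rather than looser bounds.
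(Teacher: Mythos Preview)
Your proposal is essentially the paper's proof: both apply Chebyshev to the clean block means, count corrupted-plus-far blocks, and invoke Minsker's geometric-median robustness; the paper simply packages the last two steps into \citet[Remark~3.1]{MinskerGeometricMoM} with explicit choices $p(\tau)=\tfrac{1}{2}(\tfrac{1}{2}-\tau)^2$ and $\alpha(\tau)=\tau^2+\tfrac{1}{4}$ rather than redoing the Chernoff count. One concrete slip: the deterministic constant in Minsker's lemma is $C_\alpha=(1-\alpha)/\sqrt{1-2\alpha}$, not $2(1-\alpha)/(1-2\alpha)$; with your stated $C_\alpha$ the prefactor would come out as $\mathcal{O}((\tfrac{1}{2}-\tau)^{-3})$ rather than the claimed $\mathcal{O}((\tfrac{1}{2}-\tau)^{-2.5})$, so be sure to use the sharper form when you do the bookkeeping.
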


To interpret Theorem \ref{thm:gmom_concentration}, it is helpful to consider the complementary statement of \eqref{eqn:thm_gmom_concentration}. It reads that with probability at least $1-\delta$, the GMoM approximates $\bs{\mu}$ correctly up to some bound $B(n,\delta,\tau)$. To illustrate how this can be used, assume that the number of corrupt samples $n_c$ grows with $n$ but is $o(n)$. For some fixed $\tau_0$, one can set $\delta:=\exp(-(\lceil n_c / \tau_0 \rceil - 1)/17)$ to satisfy the assumptions of the theorem, resulting in $K=o(n)$. By the assumption on $n_c$, both $\delta$ and $B(n, \delta, \tau_0)$ are $o(1)$ as $n\to\infty$. Thus, the GMoM converges in probability to $\bs{\mu}$ as $n\to\infty$ in the considered setting:

\begin{corollary}
    Let $\xvect{1},\ldots,\xvect{n}\in\mathbb{R}^p$ be independent samples from an exponential family distribution with parameter $\bs{\theta}_0\in\mathbb{R}^r$. Assume the data generating model satisfies the mild regularity assumptions listed in the appendix. Allow for up to $n_c$ samples to be arbitrarily corrupted, where $n_c=o(n)$. Then, there exists a sequence $K=K(n_c)$ such that the robust score matching estimator $\hat{\bs{\theta}}(K(n_c))$ from \eqref{eqn:robust_sm_def} converges to $\bs{\theta}_0$ in probability as $n\to\infty$. 
\end{corollary}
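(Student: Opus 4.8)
The plan is to exploit the fact that, like the classical estimator, the robust estimator is an explicit rational function of two aggregated quantities: by \eqref{eqn:robust_sm_def} we have $\hat{\bs{\theta}}(K)=\hat{\mathbf{\Gamma}}_K^{-1}(\mathbf{X})\hat{\mathbf{g}}_K(\mathbf{X})$ whenever $\hat{\mathbf{\Gamma}}_K(\mathbf{X})$ is positive definite, while the integration-by-parts identity for $J_h$ gives the analogous population representation $\bs{\theta}_0=\Gammanaught^{-1}\gnaught$ with $\Gammanaught=\ex[\bs{\theta}_0]{\mathbf{\Gamma}(\xrandvect)}$ and $\gnaught=\ex[\bs{\theta}_0]{\mathbf{g}(\xrandvect)}$. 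It therefore suffices to show that $\hat{\mathbf{\Gamma}}_K(\mathbf{X})\to\Gammanaught$ and $\hat{\mathbf{g}}_K(\mathbf{X})\to\gnaught$ in probability for a suitable block sequence $K=K(n)$, and then pass to the limit through the map $(\mathbf{\Gamma},\mathbf{g})\mapsto\mathbf{\Gamma}^{-1}\mathbf{g}$, which is well defined and continuous at $(\Gammanaught,\gnaught)$ because the regularity conditions guarantee $\Gammanaught\succ0$.

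First I would calibrate the hyperparameters following the discussion after Theorem \ref{thm:gmom_concentration}. Fix any $\tau_0\in(0,\tfrac12)$ and choose a confidence sequence $\delta_n\to0$ with $\log(1/\delta_n)=o(n)$ that is nonetheless large enough that the theorem's corruption allowance $(\lfloor 17\log(1/\delta_n)\rfloor+1)\tau_0$ exceeds $n_c$ for all large $n$; since $n_c=o(n)$, the choice $\log(1/\delta_n)\asymp n_c/\tau_0$ (taken no smaller than, say, $\sqrt{n}$, so that $\delta_n\to0$ even when $n_c$ happens to be bounded) meets both requirements. Setting $K_n:=K(\delta_n,\tau_0)=\lfloor k(\tau_0)\log(1/\delta_n)\rfloor+1$ then yields $K_n=\mathcal{O}(\log(1/\delta_n))=o(n)$, so the constraint $K_n\le n/2$ holds eventually and Theorem \ref{thm:gmom_concentration} is applicable.

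Next I would apply Theorem \ref{thm:gmom_concentration} separately to the samples $\mathbf{\Gamma}(\xvect{1}),\ldots,\mathbf{\Gamma}(\xvect{n})$ (vectorized in $\mathbb{R}^{r^2}$) and to $\mathbf{g}(\xvect{1}),\ldots,\mathbf{g}(\xvect{n})$; here the regularity conditions are used to ensure that both populations have finite second moments, so that the relevant traces $\tr(\mathbf{\Sigma})$ are finite. With $\tau=\tau_0$ and $\delta=\delta_n$, the at most $n_c$ corrupted rows fall within the admissible corruption budget, and each GMoM lands within $c(\tau_0)\sqrt{\log(4(1-\tau_0)^{-2}/\delta_n)\,\tr(\mathbf{\Sigma})/n}$ of its population mean with probability at least $1-\delta_n$. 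Because $\log(1/\delta_n)=o(n)$, this bound is $o(1)$; a union bound over the two events then gives $\hat{\mathbf{\Gamma}}_{K_n}(\mathbf{X})\to\Gammanaught$ and $\hat{\mathbf{g}}_{K_n}(\mathbf{X})\to\gnaught$ jointly in probability. On the high-probability event where $\hat{\mathbf{\Gamma}}_{K_n}(\mathbf{X})$ is close to the positive definite $\Gammanaught$ it is itself positive definite, so the estimator exists, and the continuous mapping theorem delivers $\hat{\bs{\theta}}(K_n)\to\bs{\theta}_0$ in probability.

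I expect the only genuinely delicate point to be the joint calibration of $\delta$, $\tau$, and $K$: the theorem couples the corruption budget, the block count, and the error bound all through $\log(1/\delta)$, so one must exhibit a single sequence $\delta_n$ that simultaneously absorbs the $n_c$ corruptions, keeps $K_n\le n/2$, and drives the bound to zero. The hypothesis $n_c=o(n)$ is exactly what reconciles these competing demands. The remaining ingredients—finiteness of $\tr(\mathbf{\Sigma})$ for the vectorized $\mathbf{\Gamma}$ and for $\mathbf{g}$ under the regularity conditions, positive definiteness of $\Gammanaught$, and continuity of matrix inversion—are routine once the concentration bounds are in hand.
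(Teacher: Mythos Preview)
Your proposal is correct and follows essentially the same approach as the paper's proof: fix $\tau_0\in(0,\tfrac12)$, calibrate $\delta_n$ so that $\log(1/\delta_n)\asymp n_c$ simultaneously covers the corruption budget and keeps $K_n=o(n)$, apply Theorem~\ref{thm:gmom_concentration} separately to the vectorized $\mathbf{\Gamma}(\xvect{i})$ and to $\mathbf{g}(\xvect{i})$, and conclude via the continuous mapping theorem for $(\mathbf{\Gamma},\mathbf{g})\mapsto\mathbf{\Gamma}^{-1}\mathbf{g}$. The only cosmetic difference is how the bounded-$n_c$ case is handled: the paper replaces $n_c$ by $n_c+\log n$ to force $\delta_n\to 0$, whereas you impose a $\sqrt{n}$ floor on $\log(1/\delta_n)$---both devices serve the same purpose.
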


The concentration statement \eqref{eqn:thm_gmom_concentration} is based on the work of \citet[Cor.~4.1 \& Rem. 3.1]{MinskerGeometricMoM}. The theorem shares traits with results in the literature: a logarithmic relation between $1/\delta$ and $K$ \citep{LugosiMeanEstimationHeavyTails} and between $1/\delta$ and the corruption $\tau$ \citep{LaforgueMoMcorrupt}.
The range of the corruption parameter $\tau$ between $0$ and $1/2$ reflects the high breakdown point of $1/2$ of the geometric median, cf.~(\textbf{R3}). Concretely, the assumptions of the theorem ensure that at most $\tau K < K/2$ block means are corrupted, as detailed in the proof. For the Tukey median for instance, we would expect $\tau < 1/(p+1)$.

\subsection{Choice of number of blocks \texorpdfstring{$K$}{K}}\label{subsec:choice_K}
Choosing the number of blocks $K$ is a trade-off between robustness, bias and variance. As the number of blocks increases, the GMoM becomes more robust since the breakdown point of the GMoM is equal to $\lfloor \tfrac{1}{2}(K+1) \rfloor/n$, which grows with $K$. The effect that increasing $K$ has on the variance is problem dependent. For Gaussian location estimation, the maximal choice $K=n$ has higher asymptotic variance than the mean $K=1$ as shown in \citet{BrownStatisticalUsesGMed}. In heavy tailed scenarios on the other hand, the GMoM has relatively light tails as shown in Theorem \ref{thm:gmom_concentration}, which indicates that it can have lower variance than the sample mean. The bias of the GMoM also depends on the problem. The GMoM is an unbiased location estimator for any $K$ when the underlying distribution is centrally symmetric, i.e., when  $\mathbf{x} - \ex{\mathbf{x}}$ and $\ex{\mathbf{x}}-\mathbf{x}$ have the same distribution \citep{serfling2006multivariate}. In general however, the geometric median is a biased estimator for the population mean, making the GMoM biased as well.  Still, the GMoM typically has small bias for small $K$, as the central limit theorem implies that the block means are approximately Gaussian and thus centrally symmetric. For large $K$ the bias will generally be larger as the GMoM approaches the geometric median at $K=n$.

A general choice of $K$ when a proportion $\varepsilon$ of samples is corrupted should comfortably exceed the breakdown point for robustness, but not be too large in order to avoid bias. This reasoning is supported by a simulation study in the appendix. Since the breakdown point is exceeded when $K\geq 2\varepsilon$, we propose $K:=  4\varepsilon n $ as a compromise. This choice works well empirically as shown in Section \ref{sec:experiments}. If $4\varepsilon n$ is not an integer, is smaller than one, or is greater than $n$, $K$ is chosen to be the nearest admissible integer.

\section{Application to high-dimensional graphical modeling}\label{sec:highDimGraphModel}

As an application of special interest, we consider a general  pairwise interaction model given by \begin{equation}\label{eqn:pw_model_density}
    p(x|\mathbf{\Theta}, \bs{\eta}) := \exp\bigg(- \sum_{1\leq i\leq j\leq m}\Theta_{ij} t_{ij}(x_i, x_j) - \sum_{i=1}^m \eta_i t_i(x_i) - a(\mathbf{\Theta}, \bs{\eta})\bigg),\quad \xrandvect\in\mathcal{X},
\end{equation} where the domain $\mathcal{X}$ can be $\mathbb{R}^m$ or have boundaries like in Example \ref{example:sqr}. Let $\mathbf{X}=\{\xvect{1},\ldots,\xvect{n}\}$ be an i.i.d. sample from \eqref{eqn:pw_model_density}. Score matching for pairwise interaction models simplifies structurally when the dummy variables $\Theta_{ji}:= \Theta_{ij}$ for $j < i$ are introduced;  for example, $\mathbf{\Gamma}(\xrandvect)$ is block-diagonal \citep{YuScoreMatchNonNeg}. To apply the theory from \citet{YuScoreMatchNonNeg}, this section assumes that $\mathbf{\Gamma}(\xrandvect)$ and $\mathbf{g}(\xrandvect)$ are derived for the (extended) square parameter matrix $\mathbf{\Theta}=(\Theta_{ij})$. We abbreviate the pair of $\mathbf{\Theta}$ together with the parameter vector $\bs{\eta}$ by a single $r$-dimensional parameter $\bs{\theta}$.

Motivated by applications such as gene regulatory networks, we focus on the case that the dimension $m$ is large, most $\Theta_{ij}$ are zero \citep{OhSparseGeneRegulatoryNetworks}, and the sample size $n$ is smaller than the dimension $m$
\citep{ChuHighDimGGMPractical}.
To incorporate the sparsity assumption, we include an $\ell_1$-regularization penalty in the objective function. For $n < m$, we follow \citet{YuScoreMatchNonNeg} and include a diagonal multiplier that ensures that the Gram matrix is positive definite. We thus propose the following estimator:

\begin{definition}\label{def:estimator} 
Using $\hat{\mathbf{\Gamma}}_K$ and $\hat{\mathbf{g}}_K$ from \eqref{eq:estimator:def_parts}, define for $\beta, \lambda >0$ with $\hat{\bs{\Gamma}}_{K;\beta}:= \hat{\mathbf{\Gamma}}_K + \beta\cdot\diag(\hat{\mathbf{\Gamma}}_K)$,
\begin{align}\label{eqn:full_estimator_def}
\hspace{-0.2cm}\hat{\bs{\theta}}(K, \beta, \lambda):= \underset{\bs{\theta}\in\Omega}{\argmin}\frac{1}{2}\bs{\theta}^\mathsf{T} \hat{\bs{\Gamma}}_{K;\beta}\bs{\theta} + \hat{\mathbf{g}}_K^\mathsf{T} \bs{\theta} + \lambda \|\bs{\theta}\|_1.\end{align}
\end{definition}

Computationally, this estimator is attractive. First, $\mathbf{\Gamma}(\xvect{i})$ and $\mathbf{g}(\xvect{i})$ are assembled for $i=1,\ldots, n$. This needs $\mathcal{O}(nmr^2)$ operations under the assumption that $\partial_i T$ is evaluated in constant time. Since the number of parameters $r$ is $\mathcal{O}(m^2)$, the complexity is $\mathcal{O}(nm^5)$, although factoring in the block-diagonal structure of $\mathbf{\Gamma}(\xrandvect)$ reduces this to $\mathcal{O}(nm^4)$ and even $\mathcal{O}(nm^3)$ in very symmetric models like the square root graphical model \citep{YuScoreMatchNonNeg}.
Next, the GMoMs $\hat{\mathbf{\Gamma}}_K(\mathbf{X})$ and $\hat{\mathbf{g}}_K(\mathbf{X})$ are computed iteratively, where each iteration requires $\mathcal{O}(r^2K)=\mathcal{O}(m^4 K)$ operations (only $\mathcal{O}(m^3 K)$ when factoring in the block structure of $\mathbf{\Gamma}(\mathbf{x})$). The actual $\ell_1$-regularized optimization problem can be solved by iterative methods like coordinate descent \citep{FriedmannCoordinateDescent}. 

To treat the estimator from \eqref{eqn:full_estimator_def} theoretically, we introduce the following notation and definitions:

\begin{definition} \label{def:score_m_success_conditions}
Let $\bs{\theta}_0=(\mathbf{\Theta}_0, \bs{\eta}_0)$ be the unknown true parameter and $\mathbf{\Gamma}_0 := \ex[\bs{\theta}_0]{\mathbf{\Gamma}(\xrandvect)}$. Define $$d_{\bs{\theta}_0} := \underset{j=1,\ldots, m}{\max} \bigl(\#\{i : (\mathbf{\Theta}_0)_{ij}\neq 0\} + 1\{(\bs{\eta}_0)_{j} \neq 0\}\bigr).$$ 
Let $c_{\bs{\theta}_0}:=\vertiii{\mathbf{\Theta}_0}_{\infty,\infty}$.
Write $S(\bs{\theta}):=\{i : \bs{\theta}_i\neq 0\}$ for the support of a parameter vector. Abbreviate $S_0 := S(\bs{\theta}_0)$.
Further, if $\mathbf{\Gamma}_{0,S_0S_0}$ is invertible, set \begin{align*}c_{\mathbf{\Gamma}_0}&:=\vertiii{(\mathbf{\Gamma}_{0,S_0S_0})^{-1}}_{\infty, \infty}\\
I_{S_0} &:= \vertiii{\mathbf{\Gamma}_{0, S_0^c S_0}(\mathbf{\Gamma}_{0, S_0S_0}^{-1})}_{\infty, \infty}.\end{align*}

Finally, we say $\mathbf{\Gamma}_0$ satisfies the \emph{irrepresentability condition} with incoherence parameter $\alpha\in(0,1]$ and edge set $S_0$, if $I_{S_0}\leq (1 - \alpha)$.
\end{definition}

The following theorem shows concentration of $\hat{\bs{\theta}}(K, \beta, \lambda)$ around the true parameter $\bs{\theta}_0$ with high probability even under contamination, extending the work of \citet{YuScoreMatchNonNeg} and \citet{LinHighDimScoreMatching}.

\begin{theorem}\label{thm:success_guarantee_theta_hat}
    Let $\xvect{1},\ldots,\xvect{n}\in\mathbb{R}^m$ be i.i.d. samples from a pairwise interaction model with parameter $\bs{\theta}_0$. Assume that $\mathbf{\Gamma}_0$ satisfies the irrepresentability condition with parameter $\alpha$ and edge set $S_0$. Further, suppose $\mathbf{\Sigma}_{\mathbf{\Gamma}_0} := \Var_{\bs{\theta}_0}(\mathbf{\Gamma}(\xrandvect))$ and $\mathbf{\Sigma}_{\mathbf{g}_0} := \Var_{\bs{\theta}_0}(\mathbf{g}(\xrandvect))$ exist with $\tr(\mathbf{\Sigma}_{\mathbf{\Gamma}_0}) >0$.
    
     Fix a confidence level $0 < \delta \leq 1$. We allow up to $n_c :=\tau (\lfloor 17\cdot\log(1/\delta)\rfloor + 1)$ samples being arbitrarily corrupted, with $0\leq\tau< 1/2$. Let $K = K(\delta, \tau)$ and $c(\tau)$ be as in Theorem \ref{thm:gmom_concentration}. Let \begin{equation}\label{eqn:thm_theta_hat_guarant_diag_mult_req}0\leq \beta \leq \frac{1}{1 + (\|\mathbf{\Gamma}_0\|_2/\sqrt{2\tr(\mathbf{\Sigma}_{\mathbf{\Gamma}_0})})\,\sqrt{n/K}}.\end{equation} Finally, with constants and notation from Definition \ref{def:score_m_success_conditions}, if \begin{align}\label{eqn:thm_theta_hat_guarant_n_growth}
        \hspace{-0.2cm}n & > \left(\frac{24d_{\bs{\theta}_0}c_{\mathbf{\Gamma}_0}c(\tau)}{\alpha}\right)^2 \log\left(\frac{4}{(1-\tau)^2}\frac{1}{\delta}\right) \tr(\mathbf{\Sigma}_{\mathbf{\Gamma}_0}) , \\\label{eqn:thm_theta_hat_guarant_lambda_growth}
        \hspace{-0.2cm}\lambda & > \frac{6 c(\tau)(2 - \alpha)}{\alpha}\sqrt{\log\left(\frac{4}{(1-\tau)^2}\frac{1}{\delta}\right)\frac{1}{n}}\cdot \\\notag &\quad\max\left(2 c_{\bs{\theta}_0} \sqrt{\tr(\mathbf{\Sigma}_{\mathbf{\Gamma}_0})}, \sqrt{\tr(\mathbf{\Sigma}_{g_0})}\right),
    \end{align}
    with probability at least $1 - 2\delta$, the estimator $\hat{\bs{\theta}}(K, \beta, \lambda)$ is unique with $S(\hat{\bs{\theta}}(K, \beta, \lambda)) \subset S_0$ and \begin{equation}\label{eqn:thm_theta_hat_guarant_result}\|\hat{\bs{\theta}}(K, \beta, \lambda) - \bs{\theta}_0\|_\infty\leq\frac{c_{\mathbf{\Gamma}_0}}{2 - \alpha}\lambda.\end{equation} 
\end{theorem}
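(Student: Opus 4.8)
The plan is to establish all three conclusions---uniqueness, the absence of false inclusions $S(\hat{\bs{\theta}})\subseteq S_0$, and the $\ell_\infty$ bound \eqref{eqn:thm_theta_hat_guarant_result}---through the \emph{primal-dual witness} (PDW) construction standard in $\ell_1$-regularized $M$-estimation, carried out on the robust Gram matrix $\hat{\bs{\Gamma}}_{K;\beta}$. Everything will be argued on a single high-probability event that I secure first. Applying Theorem~\ref{thm:gmom_concentration} once to the $r^2$-vectors $\mathbf{\Gamma}(\xvect{1}),\dots,\mathbf{\Gamma}(\xvect{n})$ and once to $\mathbf{g}(\xvect{1}),\dots,\mathbf{g}(\xvect{n})$ with the common $K=K(\delta,\tau)$, a union bound gives, with probability at least $1-2\delta$, the simultaneous Euclidean (hence entrywise) deviations
$$\|\hat{\mathbf{\Gamma}}_K-\mathbf{\Gamma}_0\|_\infty \le r_\Gamma := c(\tau)\sqrt{\log\!\Big(\tfrac{4}{(1-\tau)^2\delta}\Big)\tfrac{\tr(\mathbf{\Sigma}_{\mathbf{\Gamma}_0})}{n}},\qquad \|\hat{\mathbf{g}}_K-\mathbf{g}_0\|_\infty \le r_g := c(\tau)\sqrt{\log\!\Big(\tfrac{4}{(1-\tau)^2\delta}\Big)\tfrac{\tr(\mathbf{\Sigma}_{\mathbf{g}_0})}{n}}.$$
I condition on this event; every subsequent statement is deterministic given these two radii.

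Next I would set up the witness. Fixing $\hat{\bs{\theta}}_{S_0^c}=0$, I define $\hat{\bs{\theta}}_{S_0}$ as the minimizer of the restricted (oracle) problem supported on $S_0$. To see this is well posed I argue that $\hat{\bs{\Gamma}}_{K;\beta,S_0S_0}$ is positive definite: invertibility of $\mathbf{\Gamma}_{0,S_0S_0}$ (assumed, via $c_{\mathbf{\Gamma}_0}$) together with the entrywise bound $r_\Gamma$---promoted to a $\vertiii{\cdot}_{\infty,\infty}$ bound using that each relevant block has at most $d_{\bs{\theta}_0}$ nonzero columns---and the constraint \eqref{eqn:thm_theta_hat_guarant_diag_mult_req} on the diagonal multiplier $\beta$ keep the perturbed submatrix positive definite with $\vertiii{(\hat{\bs{\Gamma}}_{K;\beta,S_0S_0})^{-1}}_{\infty,\infty}$ still comparable to $c_{\mathbf{\Gamma}_0}$. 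Reading the subgradient $\hat{\mathbf{z}}$ off the stationarity conditions completes the pair $(\hat{\bs{\theta}},\hat{\mathbf{z}})$; by the standard PDW lemma, strict dual feasibility $\|\hat{\mathbf{z}}_{S_0^c}\|_\infty<1$ together with the positive definiteness just established certifies both uniqueness and $S(\hat{\bs{\theta}})\subseteq S_0$.

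The crux is verifying strict dual feasibility. Eliminating $\hat{\bs{\theta}}_{S_0}$ from the stationarity condition on $S_0^c$ and subtracting the population stationarity $\mathbf{\Gamma}_{0,S_0S_0}\bs{\theta}_{0,S_0}=\mathbf{g}_{0,S_0}$ (valid since $\bs{\theta}_0$ minimizes the population score-matching loss and vanishes off $S_0$), I decompose $\hat{\mathbf{z}}_{S_0^c}$ into the leading incoherence term $\hat{\bs{\Gamma}}_{K;\beta,S_0^cS_0}(\hat{\bs{\Gamma}}_{K;\beta,S_0S_0})^{-1}\hat{\mathbf{z}}_{S_0}$ plus sampling terms proportional to $r_\Gamma$ and $r_g$ over $\lambda$. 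The irrepresentability hypothesis $I_{S_0}\le 1-\alpha$ controls the leading term at the population level; the factor $24 d_{\bs{\theta}_0}c_{\mathbf{\Gamma}_0}c(\tau)/\alpha$ in \eqref{eqn:thm_theta_hat_guarant_n_growth} is calibrated so that $r_\Gamma$, once inflated by $d_{\bs{\theta}_0}$ and $c_{\mathbf{\Gamma}_0}$, perturbs the empirical incoherence by a controlled amount, while the lower bound on $\lambda$ in \eqref{eqn:thm_theta_hat_guarant_lambda_growth} (built around $\max(2c_{\bs{\theta}_0}\sqrt{\tr(\mathbf{\Sigma}_{\mathbf{\Gamma}_0})},\sqrt{\tr(\mathbf{\Sigma}_{\mathbf{g}_0})})$) forces the sampling contribution below the remaining margin; summing gives perturbation $+$ sampling $<\alpha$, hence $\|\hat{\mathbf{z}}_{S_0^c}\|_\infty<1$. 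For the error bound I restrict to $S_0$ and subtract the same population identity,
$$\hat{\bs{\theta}}_{S_0}-\bs{\theta}_{0,S_0}=(\hat{\bs{\Gamma}}_{K;\beta,S_0S_0})^{-1}\Big[(\hat{\mathbf{g}}_{K,S_0}-\mathbf{g}_{0,S_0})-(\hat{\bs{\Gamma}}_{K;\beta,S_0S_0}-\mathbf{\Gamma}_{0,S_0S_0})\bs{\theta}_{0,S_0}-\lambda\hat{\mathbf{z}}_{S_0}\Big],$$
and take $\ell_\infty$-norms: bounding $\vertiii{(\hat{\bs{\Gamma}}_{K;\beta,S_0S_0})^{-1}}_{\infty,\infty}$ by a quantity just above $c_{\mathbf{\Gamma}_0}$, using $\|\hat{\mathbf{z}}_{S_0}\|_\infty\le1$, and absorbing the sub-$\lambda$ sampling terms produces the constant $c_{\mathbf{\Gamma}_0}/(2-\alpha)$, giving \eqref{eqn:thm_theta_hat_guarant_result} (signs inside the bracket being immaterial for the norm estimate); since both vectors vanish off $S_0$, the bound extends to the full parameter.

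I expect the genuine obstacle to be the incoherence transfer in the penultimate step. Theorem~\ref{thm:gmom_concentration} only delivers a Euclidean deviation over all $r^2$ coordinates, so converting $r_\Gamma$ into a $\vertiii{\cdot}_{\infty,\infty}$-norm bound on the blocks $\hat{\bs{\Gamma}}_{K;\beta,S_0^cS_0}$ and $(\hat{\bs{\Gamma}}_{K;\beta,S_0S_0})^{-1}$ requires careful bookkeeping with the maximum degree $d_{\bs{\theta}_0}$ (bounding the nonzeros per row of the relevant blocks) and a Neumann-series control of the perturbed inverse, all while ensuring the $\beta$-inflation permitted by \eqref{eqn:thm_theta_hat_guarant_diag_mult_req} erodes the margin $\alpha$ only by a controlled amount. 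The remaining steps are the routine PDW calculus.
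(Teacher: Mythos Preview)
Your PDW reconstruction is sound in substance but takes a longer route than the paper. The paper's proof is essentially two reductions. First, a short lemma shows that under the constraint \eqref{eqn:thm_theta_hat_guarant_diag_mult_req} the diagonal multiplier is absorbed \emph{directly into the concentration bound}: on the event $\|\hat{\mathbf{\Gamma}}_K-\mathbf{\Gamma}_0\|_2\le r_\Gamma$ one checks $\|\beta\diag(\hat{\mathbf{\Gamma}}_K)\|_2\le r_\Gamma$ as well, so $\|\hat{\bs{\Gamma}}_{K;\beta}-\mathbf{\Gamma}_0\|_\infty\le 2r_\Gamma$. Second, the paper invokes the deterministic support-recovery theorem of \citet{YuScoreMatchNonNeg} as a black box: that result takes as hypotheses precisely $\|\hat{\bs{\Gamma}}_{K;\beta}-\mathbf{\Gamma}_0\|_\infty<\varepsilon_1$, $\|\hat{\mathbf{g}}_K-\mathbf{g}_0\|_\infty<\varepsilon_2$, $d_{\bs{\theta}_0}\varepsilon_1\le\alpha/(6c_{\mathbf{\Gamma}_0})$ and $\lambda>3(2-\alpha)\max(c_{\bs{\theta}_0}\varepsilon_1,\varepsilon_2)/\alpha$, and delivers all three conclusions. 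Conditions \eqref{eqn:thm_theta_hat_guarant_n_growth}--\eqref{eqn:thm_theta_hat_guarant_lambda_growth} are exactly these with $\varepsilon_1=4r_\Gamma$, $\varepsilon_2=2r_g$. The ``genuine obstacle'' you flag---turning the $\ell_2$ concentration into $\vertiii{\cdot}_{\infty,\infty}$ control via $d_{\bs{\theta}_0}$ and a Neumann series for the perturbed inverse---is the content of that cited theorem, so the paper bypasses it by reference; and your plan to track $\beta$ as a separate erosion of the incoherence margin, while workable, is less clean than folding it into $\varepsilon_1$ upfront, which is where the specific form of \eqref{eqn:thm_theta_hat_guarant_diag_mult_req} earns its keep and explains the constants $24$ and $6$.
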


Theorem \ref{thm:success_guarantee_theta_hat} guarantees that with high probability the maximal difference between the estimated model parameters $\hat{\bs{\theta}}(K, \beta, \lambda)$ and the true parameter $\bs{\theta}_0$ is small, and that any non-zero interaction in the model induced by $\hat{\bs{\theta}}(K, \beta, \lambda)$ is also present in the true model. To illustrate the implications of the theorem, like in Section~\ref{subsec:robust_guarant_basic_sm}, consider $n_c=o(n)$, $\tau:=\tau_0$ and $\delta:=\exp(-(\lceil n_c / \tau_0 \rceil - 1)/17)$. Then, $K=o(n)$ and $\beta=o(1)$ as $n\to\infty$. Since $\log(4/(1-\tau_0)^2\cdot 1/\delta)=o(n)$, the requirement \eqref{eqn:thm_theta_hat_guarant_n_growth} is satisfied for large $n$, and the lower bound in \eqref{eqn:thm_theta_hat_guarant_lambda_growth} allows a choice $\lambda=o(1)$. By \eqref{eqn:thm_theta_hat_guarant_result}, $\hat{\bs{\theta}}(K, \beta, \lambda)$ converges to $\bs{\theta}_0$ in probability as $n\to\infty$.

Theorem \ref{thm:success_guarantee_theta_hat} reads similarly to the theorems in \citep[Sect. 6]{YuScoreMatchNonNeg}. However, since Theorem \ref{thm:success_guarantee_theta_hat} does not assume an underlying Gaussian distribution, that is possibly truncated, the bounds on $\beta, n$ and $\lambda$ depend on $\Gammanaught$ and $\mathbf{g}_0$ explicitly.

\section{Numerical experiments}
\label{sec:experiments}

\subsection{Simulation study}
\label{subsec:simulations}

\begin{figure*}[t]
    \centering
    \includegraphics[scale=0.85]{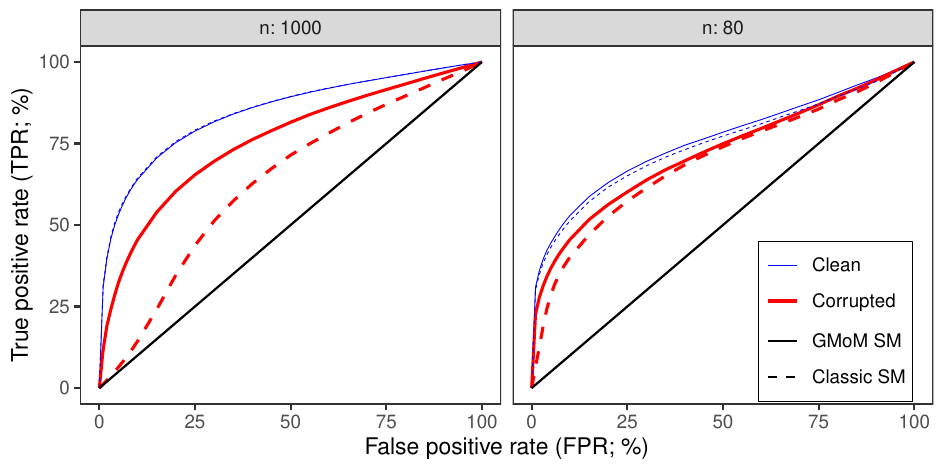}
    \caption{ROC curves for support recovery in the square root model. The pointwise uncertainty in TPR is at most $\pm 0.75\%$ based on $100$ (left) and $500$ simulations (right).}
    \label{fig:sqr_support_recovery}
    

\end{figure*}

\begin{figure*}[t]
    \centering
    
    \begin{subfigure}{0.5\textwidth}
    \includegraphics[width=0.9\linewidth]
    {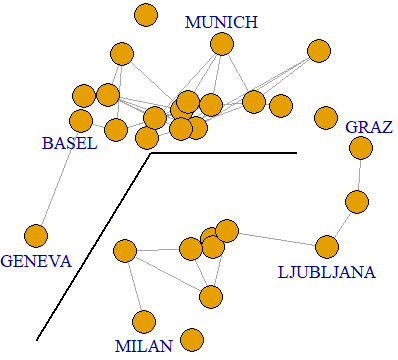} 
    \end{subfigure}\hfill
    \begin{subfigure}{0.5\textwidth}
    \includegraphics[width=0.9\linewidth]
    {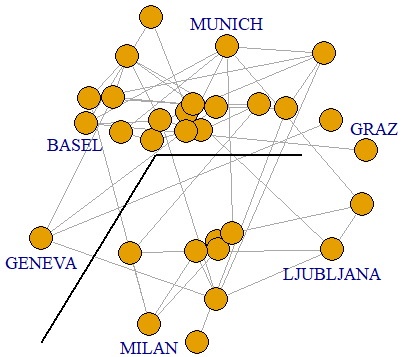}
    \end{subfigure}

    \caption{Precipitation dependencies (grey lines) learned from uncorrupted (left) and contaminated data (right). Black lines sketch the location of some of the highest mountains of the Alps.}
    \label{fig:application_graphs}
\end{figure*}

We apply the estimator $\hat{\bs{\theta}}(K, \beta, \lambda)$ from Definition \ref{def:estimator} to simulated data from square root graphical models, under scenarios that include rowwise corruption. The `standard' regularized score matching estimator,
corresponding to $K=1$ block, serves as a baseline. Performance is judged by how well the zero structure of $\mathbf{\Theta}$ is learned, assessed through receiver operator characteristic (ROC) curves in Figure \ref{fig:sqr_support_recovery}. The simulations run within a few hours on a personal laptop. Experiments on Gaussian data, one of the classic graphical models classes, lead to similar results and are contained in the appendix.

\textbf{Data-generating model:} As in \cite{YuScoreMatchNonNeg}, we considered a $m=100$ dimensional model with either $n=1000$ or $n=80$ samples, the latter acting as a high-dimensional scenario. Additional choices for $n$ are considered in the appendix. The interaction matrix $\mathbf{\Theta}$ was determined by first selecting a graph on $m$ vertices uniformly from the set of all graphs with $\kappa$ edges (one of the variants of the Erdős–Rényi graph model), and then drawing the edge strength $\Theta_{ij}$ from $\pm\text{Unif}(0.5,1)$. The ratio $\kappa/n$ was kept constant and set to $1/2$ to have an average node degree of $m/10$ for $n=1000$. Each ROC curve reports average ROCs \citep{FawcettIntroductionToROC} for $10$ randomly chosen $\mathbf{\Theta}$. The location-like parameter $\bs{\eta}$ was randomly drawn from $\{0,0.5,-0.5\}$. 

\textbf{Contamination details:} For the simulations with contamination, $5\%$ of data rows were replaced by independent Pareto draws. The Pareto scale parameter was set to the respective column mean and the shape parameter to $1$, which ensures that most corrupted values are similar to the uncorrupted values and, due to the heavy tails of the Pareto, a small portion are strong outliers. Results under different contamination settings are reported in the appendix.

\textbf{Hyperparameter tuning:} The number of blocks was set to $K:= 4\cdot0.05\,n=n/5$ as discussed in Section \ref{subsec:choice_K}. In simulations with $5\%$ contamination, $\hat{\bs{\theta}}(K, \beta, \lambda)$ is thus adapted to the actual corruption amount, while in the uncorrupted case it represents a conservative block size choice. The baseline from \citet{YuScoreMatchNonNeg} represents the opposite: it has $K=1$ by definition, making it adapted to the uncontaminated simulations, but underestimating the corruption amount otherwise. The diagonal multiplier $\beta$ was set to $0$ for $n=1000$, and for $n=80$ to the upper bound in Theorem \ref{thm:success_guarantee_theta_hat} with $\Gammanaught$ being estimated from uncorrupted data (yielding $\beta\approx 0.01$). The upper bound was chosen since \citet{YuScoreMatchNonNeg} experimentally found this aided support recovery. The regularization parameter $\lambda$ was varied to obtain a ROC curve. The weights are set to $\mathbf{h}(\xrandvect):= \xrandvect^{3/2}$, a choice found to be favorable in \citep{YuScoreMatchNonNeg}.

\textbf{Interpretation:}
Figure \ref{fig:sqr_support_recovery} shows that the GMoM procedure is on par with the baseline in terms of support recovery on uncorrupted data and outperforms the baseline on contaminated data. A pointwise $95\%$ bootstrap confidence band around the curves has a maximal width of $\pm 0.75\%$, implying that this conclusion is statistically significant. For the high-dimensional experiment $n=80$, the effect is less pronounced and the difference under contamination only significant for the lower end of the FPR spectrum. This is due to the small sample size and increased sparsity, making the system already very noisy even without corruption.

\subsection{Data on precipitation across the Alps}
\label{subsec:dataexample}

Consider the task of learning how precipitation at different weather stations in central Europe is related. We model the dependence between monthly total precipitation at $m=30$ stations using the European Climate Assessment \& Dataset (ECA\&D) \citep{klein2002daily} from \url{www.ecad.eu}. The records of the stations share a span of $87$ years. Data from November, January and March was used to obtain similar precipitation distributions, which are roughly independent due to the one-month gap. This leads to $n=3\cdot 87 = 261$ samples. The data clearly is non-Gaussian, for example due to positivity. Instead, inspired by the marginal distributions, the square root graphical model is chosen. To learn the model, $\lambda$ is tuned such that the graph contains $45$ edges to get an average node degree of $3$, which roughly equals the number of geographical neighbors of the average station. The diagonal multiplier $\beta$ is not needed since $n \gg m$. The graph learned by the baseline score matching approach ($K=1$) 
is shown on the left of Figure \ref{fig:application_graphs}.

While there is no direct ground truth information on the precipitation dependence, the graph from the uncorrupted data has the notable feature that it only connects stations within the same geographical neighborhood. Moreover, no edge crosses the Alps mountains sketched by black lines in Figure \ref{fig:application_graphs}. This is expected since high mountains act as a barrier for clouds. According to the latter physical consideration, edges crossing the Alps are false discoveries. The corresponding false discovery rate, termed Alps-FDR, is used to judge a learned precipitation network. No edges crossing the Alps yields the optimal $0\%$ Alps-FDR.   

Now, as our experiment, we alter the data through random contamination as in Section \ref{subsec:simulations}. A graph learned under $5\%$ contamination using the non-robust the baseline ($K=1$) is shown on the right of Figure \ref{fig:application_graphs}. It is noticeably more noisy and connects stations further away. Its Alps-FDR is $19\%$, meaning that roughly every fifth connection is considered unrealistic. In a Monte Carlo simulation with $100$ respective runs, different proportions $\varepsilon$ of the sample were contaminated and the Alps-FDR of the baseline $K=1$ compared with $K=4\varepsilon n$. Additionally, a Gaussian graphical model (GGM), arguably the most studied graphical model, was fit to the data, knowing that the Gaussianity assumption was violated. Results are reported in Table \ref{tab:application_result_table} with $95\%$ bootstrap confidence intervals. It is evident that the GMoM version with $K=4\varepsilon n$ has the best Alps-FDR in every corruption scenario. For comparison, chosing a graph uniformly at random from all graphs with $45$ edges has an Alps-FDR of roughly $42\%$.    

\begin{table}[t]
\renewcommand{\arraystretch}{1.05}
    \centering
    \caption{The Alps-FDR for different corruption proportions computed from $100$ Monte Carlo runs.}
    \label{tab:application_result_table}
    \vspace{\the\baselineskip}
    \begin{tabular}{S[table-format=2]S[table-format=2.1 \pm 1.1]S[table-format=2.1 \pm 1.1]S[table-format=2.1 \pm 1.1]}
    \hline
    {Corr. ($\%$)} & {GMoM SM} & {Classic SM} & {GGM}\\\hline
         1&   2.4\pm 0.5 &  4.2\pm 0.8 & 12.8\pm 1.6\\
         5&   7.0\pm 0.8 & 13.5\pm 1.0 & 26.9\pm 1.7\\
         10&  8.1\pm 0.8 & 17.8\pm 1.0 & 31.7\pm 1.3\\
         20&  10.5\pm 1.0 & 22.4\pm 1.1 & 38.7\pm 1.4
    \end{tabular}  
\end{table}

\section{Conclusion and future work}

This paper introduces a robust score matching estimator that utilizes the geometric median of means to circumvent existence issues that result from more naive robustification approaches. Theoretical guarantees and empirical evidence demonstrate our estimator's ability to recover the dependence structure of a pairwise interaction model, even when a portion of the observations is contaminated. In the presented numerical experiments on uncorrupted data, the dependence recovery was on par with that of the classical regularized score matching estimator from \citet{YuScoreMatchNonNeg}. 

An interesting topic for future work is to further examine the optimal choice of the number of blocks $K$. Evidently, there is a trade-off between bias and variance inherent to score matching; especially for asymmetric, heavy tailed distributions. Neither bias nor variance of the geometric median of means in this scenario seems to be well understood. How the trade-off is influenced by contamination, possibly also in different forms such as cellwise contamination, is another open problem. Additionally, the concentration guarantee in Theorem \ref{thm:success_guarantee_theta_hat} could likely be improved if one were able to refine the interplay between $\|\cdot\|_2$ from the concentration of the geometric median and $\|\cdot\|_1$ from the $\ell_1$-regularization.

\section*{Acknowledgements}
We acknowledge the data providers in the ECA\&D project \citep{klein2002daily}. Data and metadata available at \url{https://www.ecad.eu}. This paper is supported by the DAAD programme Konrad Zuse Schools of Excellence in Artificial Intelligence, sponsored by the Federal Ministry of Education and Research. Further, this work has been funded by the German Federal Ministry of Education and Research and the Bavarian State Ministry for Science and the Arts. The authors of this work take full responsibility for its content. This project has received funding from the European Research Council (ERC) under the European Union’s Horizon 2020 research and innovation programme (grant agreement No 883818).

\bibliography{bib}

\appendix
\vspace{1cm}
\noindent{\LARGE\textbf{Appendix}}
\vspace{0.5cm}
\section{Regularity conditions to ensure the positive definiteness of \texorpdfstring{$\mathbf{\Gamma}$}{Gamma}}\label{suppSec:pd_conditions}

For the almost sure positive definiteness of the score matching design matrix $\mathbf{\Gamma}(x)\in\mathbb{R}^{r\times r}$, we require two assumptions on the sufficient statistic $\mathbf{t}\colon\mathbb{R}^m\to\mathbb{R}^r$: \begin{itemize}
    \item[(A1)] Without loss of generality, assume that $t_i$ is not constant for any $i\in \{1,\ldots, r\}$ 
    \item[(A2)] With $R_j := \{i\in \{1,\ldots, r\} \mid \partial_j t_i \neq 0\}$ and $d_j := |R_j|$ for $j\in\{1,\ldots, m\}$, define the function $\mathbf{v}^{(j)}\colon\mathbb{R}^m\to\mathbb{R}^{d_j}, \mathbf{v}^{(j)}(\xrandvect) := \sqrt{h_j(\xrandvect)}\cdot\partial_j \mathbf{t}|^{R_j}(\xrandvect)$. We assume that for any proper linear subspace $L$ of $\mathbb{R}^{d_j}$, the pre-image $\left(\mathbf{v}^{(j)} \right)^{-1}(L)$ is a Lebesgue null set in $\mathbb{R}^m$. 
\end{itemize}

\begin{example}
    In the square root graphical model with $\bs{\eta}$ known, it holds that $d_j=m$ and up to permutations of the components, $\mathbf{v}^{(j)}(\mathbf{x})=-\sqrt{h_j(x_j)/x_j}\cdot\sqrt{\mathbf{x}}$. If $h>0$ is invertible and sufficiently smooth, this is a diffeomorphism (its inverse equals $h^{-1}(y_j^2)( \mathbf{e}^{(j)} + (\mathbf{1} - \mathbf{e}^{(j)})(\mathbf{y}/y_j)^2)$ with the $j$-th Euclidean basis vector $\mathbf{e}^{(j)}$ and the all-one vector $\mathbf{1}$) and thus null sets, in particular proper linear subspaces, are mapped to null sets by the change of variables theorem for Lebesgue's measure.
\end{example}

\begin{lemma}\label{lemma:gamma_pos_def} 
Let $\xvect{1},\ldots, \xvect{n}\in\mathbb{R}^m$ be i.i.d. according to an exponential family satisfying (A1) and (A2). Further, let $(c_{ij})$ be variables on the same probability space that are positive almost surely. Assume $n\geq \max_{j=1,\ldots, m}\; d_j$. Then, $$\mathbf{M}:=\sum_{j=1}^m\sum_{i=1}^n c_{ij} h_j(\xvect{i}) \partial_j \mathbf{t}(\xvect{i}) \partial_j \mathbf{t}(\xvect{i})^\mathsf{T}$$ is positive definite almost surely.
\end{lemma}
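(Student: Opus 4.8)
The plan is to exploit that $\mathbf{M}$ is a sum of positive semidefinite rank-one terms, so that its positive definiteness reduces to a spanning condition that can be verified using (A2). First I would fix an arbitrary $\mathbf{w}\in\mathbb{R}^r$ and compute the quadratic form $\mathbf{w}^\mathsf{T}\mathbf{M}\mathbf{w}=\sum_{j=1}^m\sum_{i=1}^n c_{ij}h_j(\xvect{i})\,(\partial_j\mathbf{t}(\xvect{i})^\mathsf{T}\mathbf{w})^2$. Since every $c_{ij}>0$ almost surely and each $h_j\geq 0$, every summand is non-negative, which re-confirms that $\mathbf{M}$ is positive semidefinite; moreover, on the probability-one event $\{c_{ij}>0\text{ for all }i,j\}$ the sum vanishes if and only if each summand vanishes, i.e.\ $\sqrt{h_j(\xvect{i})}\,\partial_j\mathbf{t}(\xvect{i})^\mathsf{T}\mathbf{w}=0$ for all $i,j$. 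Because $\partial_j t_k\equiv 0$ for $k\notin R_j$, the inner product only involves the coordinates in $R_j$, so this condition rewrites as $(\mathbf{v}^{(j)}(\xvect{i}))^\mathsf{T}\mathbf{w}|^{R_j}=0$ for all $i$ and $j$, with $\mathbf{v}^{(j)}$ as in (A2) and $\mathbf{w}|^{R_j}$ the subvector of $\mathbf{w}$ on $R_j$.

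This identity shows that $\mathbf{M}$ is positive definite precisely when, for every $j$, the only $\mathbf{w}|^{R_j}$ orthogonal to all of $\mathbf{v}^{(j)}(\xvect{1}),\ldots,\mathbf{v}^{(j)}(\xvect{n})$ is the zero vector, equivalently when these $n$ vectors span $\mathbb{R}^{d_j}$ for each $j$. Indeed, spanning for all $j$ forces $\mathbf{w}|^{R_j}=0$ for every $j$, and since (A1) guarantees that each non-constant $t_k$ has $\partial_j t_k\not\equiv 0$ for some $j$, we have $\bigcup_{j=1}^m R_j=\{1,\ldots,r\}$, hence $\mathbf{w}=0$. So it remains to prove that almost surely the vectors $\{\mathbf{v}^{(j)}(\xvect{i})\}_{i=1}^n$ span $\mathbb{R}^{d_j}$ for each $j$.

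I would establish this for a fixed $j$ by a greedy induction on the partial spans $S_\ell:=\linearspan\{\mathbf{v}^{(j)}(\xvect{1}),\ldots,\mathbf{v}^{(j)}(\xvect{\ell})\}$, showing $\dim S_\ell=\min(\ell,d_j)$ almost surely. Conditioning on $\xvect{1},\ldots,\xvect{\ell-1}$, the subspace $S_{\ell-1}$ is fixed; whenever $\dim S_{\ell-1}<d_j$ it is a proper linear subspace of $\mathbb{R}^{d_j}$, so by (A2) the conditional probability that the independent sample $\xvect{\ell}$ satisfies $\mathbf{v}^{(j)}(\xvect{\ell})\in S_{\ell-1}$, i.e.\ that $\xvect{\ell}\in(\mathbf{v}^{(j)})^{-1}(S_{\ell-1})$, a Lebesgue null set, equals zero. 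Integrating out the conditioning, the dimension strictly increases with probability one at each step until it reaches $d_j$; this is where the hypothesis $n\geq d_j$ enters. Taking the union over the finitely many $j\in\{1,\ldots,m\}$ and intersecting with $\{c_{ij}>0\text{ for all }i,j\}$ leaves a probability-one event on which the spanning, and hence the positive definiteness of $\mathbf{M}$, holds simultaneously for every $\mathbf{w}$.

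The main obstacle I anticipate is the conditioning step: $S_{\ell-1}$ is itself random, so I must argue carefully that, conditional on the earlier observations, it is a fixed proper subspace to which (A2) applies, and that the resulting null events remain null after integrating out the conditioning. The rest, namely the quadratic-form reduction and the union over the finitely many coordinates $j$, is routine once the spanning statement is in hand.
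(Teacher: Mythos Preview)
Your proposal is correct and follows essentially the same route as the paper: reduce positive definiteness of $\mathbf{M}$ to the spanning property of $\{\mathbf{v}^{(j)}(\xvect{i})\}_{i=1}^{n}$ in $\mathbb{R}^{d_j}$ for each $j$, verify this via conditioning and (A2), and then use (A1) to conclude $\bigcup_j R_j=\{1,\ldots,r\}$. The only cosmetic differences are that the paper phrases the reduction via a kernel vector of $\mathbf{M}$ rather than a quadratic form, and establishes linear independence of the first $d_j$ vectors in one conditioning step, whereas your greedy induction on $\dim S_\ell$ makes the same argument a bit more explicit.
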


\begin{proof}
Let $j\in \{1,\ldots, m\}$. We show that $(\mathbf{v}^{(j)}(\xvect{i}))_{i=1,\ldots, d_j}$ are independent almost surely. Note that this collection of vectors requires $n\geq d_j$. We show that the probability of linear dependence is zero: \begin{multline*}
    \pr{\mathbf{v}^{(j)}(\xvect{i})\in\linearspan(\mathbf{v}^{(j)}(\xvect{2}),\ldots, \mathbf{v}^{(j)}(\xvect{d_j}))} = \\
    \ex{\cprob{\xvect{i}\in (\mathbf{v}^{(j)})^{-1}\left(\linearspan(\mathbf{v}^{(j)}(\xvect2),\ldots, \mathbf{v}^{(j)}(\xvect{d_j}))\right)}{(\mathbf{v}^{(j)}(\xvect{i}))_{i=2,\ldots, d_j}}} \overset{(A2)}{=} 
    \ex{0} = 0.
\end{multline*}
Define the $d_j\times d_j$ matrices $\mathbf{M}^{(j)}(d) := \sum_{i=1}^{d} c_{ij} \mathbf{v}^{(j)}(\xvect{i})\mathbf{v}^{(j)}(\xvect{i})^\mathsf{T}$. The independence result implies that $\mathbf{M}^{(j)}(d_j)$ has full rank almost surely. Otherwise, there would be $\mathbf{v}\neq 0$ in its kernel by the rank theorem. This would imply $$\mathbf{M}^{(j)}(d_j) \mathbf{v} = \sum_{i=1}^{d_j} c_{ij} \mathbf{v}^{(j)}(\xvect{i})\mathbf{v}^{(j)}(\xvect{i})^\mathsf{T} \mathbf{v} = \sum_{i=1}^{d_j} \left(c_{ij} \mathbf{v}^{(j)}(\xvect{i})^\mathsf{T} \mathbf{v}\right) \mathbf{v}^{(j)}(\xvect{i}) = \mathbf{0},$$ a contradiction to $(\mathbf{v}^{(j)}(\xvect{i}))_{i=1,\ldots, d_j}$ being independent almost surely. Since $\mathbf{M}^{(j)}(d_j)$ is positive semidefinite due to the structure of its summands (recall $c_{ij}, h_j >0)$, it follows that $\mathbf{M}^{(j)}(d_j)$ is positive definite. Also, $\mathbf{M}^{(j)}(n)$ is positive definite since only more positive semidefinite terms are added.

To show the statement of this lemma, first note that $\mathbf{M}$ is positive semidefinite since $\sum_{i=1}^n c_{ij} h_j(\xvect{i}) \partial_j \mathbf{t}(\xvect{i}) \partial_j \mathbf{t}(\xvect{i})^\mathsf{T}\in\mathbb{R}^{r\times r}$ are positive semidefinite. Assume $\mathbf{M}$ had a nontrivial vector $\mathbf{v}\in\mathbb{R}^r$ in its kernel. By positive semidefiniteness of the summands, $\mathbf{M}\mathbf{v}=\mathbf{0}$ implies $\sum_{i=1}^n c_{ij} h_j(\xvect{i}) \partial_j \mathbf{t}(\xvect{i}) \partial_j \mathbf{t}(\xvect{i})^\mathsf{T}\mathbf{v} = 0$ for all $j$. Since $\partial_j t_i = 0$ for all $i\in \{1,\ldots, r\}\setminus R_j$ by definition of $R_j$, this is equivalent to $\mathbf{M}^{(j)}(n) \cdot \mathbf{v}_{R_j} = \mathbf{0}$. By the previous result on $\mathbf{M}^{(j)}(n)$, it follows that $\mathbf{v}_{R_j}=\mathbf{0}$. Assumption (A1) guarantees that $\{1,\ldots, r\}=\bigcup_{j=1,\ldots, m} R_j$, which implies $\mathbf{v}=\mathbf{0}$, a contradiction.
\end{proof}

A direct consequence of Lemma \ref{lemma:gamma_pos_def} is that $\overline{\mathbf{\Gamma}}(\mathbf{X})$ is positive definite almost surely (choose $c_i:= 1/n$). 

To see that the same holds for the GMoM version $\hat{\mathbf{\Gamma}}_K(\mathbf{X})$, consider the following equation from the paper, which holds when the geometric median does not equal one of its arguments: \begin{align*}
    \mathbf{m}:=\gmed{\hat{\bs{\mu}}^{(1)}, \ldots, \hat{\bs{\mu}}^{(K)}} = \frac{1}{\sum_{i=1}^K 1/\|\mathbf{m} - \hat{\bs{\mu}}^{(i)}\|_2}\sum_{i=1}^K \frac{\hat{\bs{\mu}}^{(i)}}{\|\mathbf{m} - \hat{\bs{\mu}}^{(i)}\|_2}.
\end{align*}
To apply this to $\hat{\mathbf{\Gamma}}_K(\mathbf{X})$, define \begin{align*}
    \hat{\bs{\mu}}^{(k)} := \frac{1}{n/K}\sum_{i=(k-1)K+1}^{kK}\sum_{j = 1}^m h_j(\xvect{i})\partial_j\mathbf{t}(\xvect{i}) \partial_j\mathbf{t}(\xvect{i})^\top, 
\end{align*} such that $\hat{\mathbf{\Gamma}}_K(\mathbf{X})=\gmed{\hat{\bs{\mu}}^{(1)}, \ldots, \hat{\bs{\mu}}^{(K)}}$. Lemma \ref{lemma:gamma_pos_def} guarantees positive definiteness of $\hat{\mathbf{\Gamma}}_K(\mathbf{X})$ with \begin{align*}
    c_i := \frac{K}{n}\left(\sum_{k=1}^K \frac{\|\hat{\mathbf{\Gamma}}_K(\mathbf{X}) - \hat{\bs{\mu}}^{(k_i)}\|_2}{\|\hat{\mathbf{\Gamma}}_K(\mathbf{X}) - \hat{\bs{\mu}}^{(k)}\|_2}\right)^{-1},
\end{align*} where $k_i$ is the block index that $i\in\{1,\ldots, n\}$ belongs to.

\section{Proof of Theorem 3.1}
{
\renewcommand{\thetheorem}{3.1}
\begin{theorem}
    Let $\xvect{1},\ldots,\xvect{n}\in\mathbb{R}^p$ be independent samples from a $p$-dimensional distribution with mean $\bs{\mu}$ and variance $\mathbf{\Sigma}$. Fix a confidence level of $0<\delta\leq 1$.  We allow for up to $n_c := (\lfloor 17\cdot\log(1/\delta)\rfloor + 1)\tau$ samples to be arbitrarily corrupted, where $0\leq\tau< 1/2$. Then, there exist functions $k(\tau)=\mathcal{O}(1/(\tfrac{1}{2} - \tau)^{2})$ and $c(\tau)=\mathcal{O}(1/(\tfrac{1}{2} - \tau)^{2.5})$ as $\tau\to\tfrac{1}{2}$ such that when the number of blocks $K$ defined as $K=K(\delta, \tau) :=\lfloor k(\tau)\cdot \log(1/\delta)\rfloor + 1$ satisfies $K\leq n/2$, it holds that  \begin{equation}\label{eqn_appendix:thm_gmom_concentration}
        \prob\biggl( \|\gmom{K}{\xvect{1},\ldots,\xvect{n}} - \bs{\mu}\|_2 >
        c(\tau)\sqrt{\log\left(\frac{4}{(1-\tau)^2}\frac{1}{\delta}\right)\frac{\tr(\mathbf{\Sigma})}{n}}\biggr) \leq \delta.
    \end{equation}
\end{theorem}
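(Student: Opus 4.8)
The plan is to adapt the concentration analysis for the geometric median of means in \citet{MinskerGeometricMoM} to the adversarially corrupted regime. First I would partition the sample into $K$ blocks of size $n/K$ and form the block means $\hat{\bs{\mu}}^{(1)},\ldots,\hat{\bs{\mu}}^{(K)}$. The first observation is a corruption budget: since each of the at most $n_c=(\lfloor 17\log(1/\delta)\rfloor+1)\tau$ corrupted samples can spoil at most one block, and since taking the leading constant of $k(\tau)$ to be at least $17$ guarantees $K\geq\lfloor 17\log(1/\delta)\rfloor+1$, the number of corrupted block means is at most $n_c\leq\tau K<K/2$. This is the step that converts the breakdown point $1/2$ of the geometric median into the admissible range $0\leq\tau<1/2$, and it leaves at least $(1-\tau)K$ uncorrupted block means to work with.

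Next I would control a single uncorrupted block mean. As an average of $n/K$ i.i.d.\ draws it has mean $\bs{\mu}$ and covariance $\tfrac{K}{n}\mathbf{\Sigma}$, so $\ex{\|\hat{\bs{\mu}}^{(j)}-\bs{\mu}\|_2^2}=\tfrac{K}{n}\tr(\mathbf{\Sigma})$, and Markov's inequality bounds the probability that such a block mean leaves the ball of radius $r$ about $\bs{\mu}$ by $p:=\tfrac{K\tr(\mathbf{\Sigma})}{n r^2}$. I would keep $p$ and the induced radius $r=\sqrt{K\tr(\mathbf{\Sigma})/(np)}$ as free parameters. Writing $N$ for the number of uncorrupted block means lying within $r$ of $\bs{\mu}$, a binomial (Chernoff/Hoeffding) tail bound then shows $N\geq(1-\alpha)K$ with probability at least $1-\exp(-2\eta^2 K)$, where $\eta=\alpha-\tau-p(1-\tau)$ is the margin by which the expected good fraction $(1-p)(1-\tau)$ among the uncorrupted blocks exceeds the target level $1-\alpha$.

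The core of the argument is then Minsker's deviation lemma: whenever at least a $(1-\alpha)$ fraction of the $K$ points lie within $r$ of a center and $\alpha<1/2$, their geometric median is within $C_\alpha r$ of that center, with $C_\alpha$ of order $(1-2\alpha)^{-1}$. Applying this on the event $N\geq(1-\alpha)K$ yields $\|\gmom{K}{\xvect{1},\ldots,\xvect{n}}-\bs{\mu}\|_2\leq C_\alpha r$. Forcing the failure probability $\exp(-2\eta^2 K)\leq\delta$ fixes $K\gtrsim\log(1/\delta)/(2\eta^2)$, which is exactly why $K$ must scale as $k(\tau)\log(1/\delta)$ with $k(\tau)=\Theta(1/\eta^2)$; substituting $r=\sqrt{K\tr(\mathbf{\Sigma})/(np)}$ and $K\asymp k(\tau)\log(1/\delta)$ then produces a bound of the claimed shape $c(\tau)\sqrt{\log(1/\delta)\,\tr(\mathbf{\Sigma})/n}$ with $c(\tau)=C_\alpha\sqrt{k(\tau)/p}$, the refined logarithmic factor $\log(4/(1-\tau)^2\cdot 1/\delta)$ emerging from the precise form of the binomial tail estimate rather than from the crude Hoeffding bound.

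The main obstacle I anticipate is the simultaneous tuning of the three free parameters $p$, $\alpha$ and $k(\tau)$ as $\tau\to\tfrac12$. The majority requirement $(1-p)(1-\tau)>1-\alpha$ together with $\alpha<\tfrac12$ forces $p=O(\tfrac12-\tau)$ and pushes $\alpha$ toward $\tfrac12$ at rate $1-2\alpha=\Theta(\tfrac12-\tau)$; keeping the margin $\eta$ strictly positive then forces $\eta=\Theta(\tfrac12-\tau)$, so that $k(\tau)=\Theta(1/(\tfrac12-\tau)^2)$. Tracking the three factors in $c(\tau)=C_\alpha\sqrt{k(\tau)/p}$—namely $C_\alpha=O(1/(\tfrac12-\tau))$, $\sqrt{k(\tau)}=O(1/(\tfrac12-\tau))$ and $\sqrt{1/p}=O(1/\sqrt{\tfrac12-\tau})$—is precisely what produces the exponent $2.5$, and making these scalings land simultaneously on the claimed orders while keeping every probabilistic event above its required threshold is the delicate bookkeeping at the heart of the proof.
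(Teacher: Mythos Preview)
Your plan is essentially the paper's proof: both apply Minsker's geometric-median deviation lemma to block means, control each uncorrupted block mean by Chebyshev, bound the fraction of bad blocks via a binomial tail, and then tune $(p,\alpha,K)$ as functions of $\tau$. The paper streamlines this by citing the corruption-robust version directly (Remark~3.1 in \citet{MinskerGeometricMoM}, which already packages the binomial tail through the relative-entropy function $\psi(\alpha,p)=(1-\alpha)\log\frac{1-\alpha}{1-p}+\alpha\log\frac{\alpha}{p}$) and commits to the explicit choices $p(\tau)=\tfrac12(\tfrac12-\tau)^2$ and $\alpha(\tau)=\tau^2+\tfrac14$, which differ from your suggested scaling $p=\Theta(\tfrac12-\tau)$.

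One small inaccuracy to flag: Minsker's constant is $C_\alpha=(1-\alpha)/\sqrt{1-2\alpha}$, hence of order $(1-2\alpha)^{-1/2}$, not $(1-2\alpha)^{-1}$ as you state. With the paper's choice $p=\Theta((\tfrac12-\tau)^2)$ one has $\sqrt{1/p}=\Theta((\tfrac12-\tau)^{-1})$ rather than your $\Theta((\tfrac12-\tau)^{-1/2})$, so the exponent $2.5$ in $c(\tau)$ still emerges, just through a different accounting of the three factors. Neither discrepancy is fatal since the theorem only asserts existence of $k,c$ with the stated $\mathcal{O}$-orders.
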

\addtocounter{theorem}{-1}
} 
This section proves the above theorem. Let \begin{align*}
    \psi(\alpha, p) &:= (1-\alpha)\log\left(\frac{1-\alpha}{1-p}\right) + \alpha\log\left(\frac{\alpha}{p}\right).
\end{align*}
We base the proof on the following robustness result on the geometric median of independent estimators from \citet[Remark 3.1.a]{MinskerGeometricMoM}. Set $C_\alpha:=(1-\alpha)/\sqrt{1 - 2\alpha}$ for $0 < \alpha < 1/2$.

\begin{lemma} [Minsker, 2015]\label{lemma:minsker_aggregation}
    Let $\bs{\mu}\in\mathbb{R}^p$,  and let $\hat{\bs{\mu}}_1,\ldots,\hat{\bs{\mu}}_k\in\mathbb{R}^p$ be a collection of independent estimators of $\bs{\mu}$. Let the hyperparameters $0<\alpha < 1/2$, $0<p<\alpha$ and $\varepsilon > 0$ be such that \begin{equation*} 
        \prob\left(\|\hat{\bs{\mu}}_j - \bs{\mu}\|_2 >\varepsilon\right)\leq p \qquad \forall j\in J,
    \end{equation*} where $J\subset\{1,\ldots,K\}$ has cardinality at least $(1-\tau) K$, and $\tau < \frac{\alpha - p}{1-p}$. Then   \begin{equation*}
        \prob\left(\|\Gmed(\hat{\bs{\mu}}_1,\ldots\hat{\bs{\mu}}_k) - \bs{\mu}\|_2 > C_\alpha\varepsilon\right)\leq e^{-K(1-\tau)\psi\left(\frac{\alpha - \tau}{1 - \tau}, p\right)}.
    \end{equation*}
\end{lemma}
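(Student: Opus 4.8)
The plan is to split the statement into a purely deterministic geometric fact about the geometric median and a binomial tail bound, following the strategy behind Minsker's result. The deterministic core is the following claim: for arbitrary points $\hat{\bs{\mu}}_1,\ldots,\hat{\bs{\mu}}_K\in\mathbb{R}^p$ with geometric median $\mathbf{m}=\gmed{\hat{\bs{\mu}}_1,\ldots,\hat{\bs{\mu}}_K}$, if at least $(1-\alpha)K$ of the $\hat{\bs{\mu}}_j$ satisfy $\|\hat{\bs{\mu}}_j-\bs{\mu}\|_2\le\varepsilon$, then $\|\mathbf{m}-\bs{\mu}\|_2\le C_\alpha\varepsilon$. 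To prove this I would invoke the first-order optimality condition for the convex objective $\mathbf{m}\mapsto\sum_j\|\hat{\bs{\mu}}_j-\mathbf{m}\|_2$, which gives $\sum_{j=1}^K\mathbf{g}_j=\mathbf{0}$ with $\mathbf{g}_j=(\mathbf{m}-\hat{\bs{\mu}}_j)/\|\mathbf{m}-\hat{\bs{\mu}}_j\|_2$ whenever $\mathbf{m}\neq\hat{\bs{\mu}}_j$ and $\|\mathbf{g}_j\|_2\le1$ otherwise. Assuming $R:=\|\mathbf{m}-\bs{\mu}\|_2>\varepsilon$ (else there is nothing to prove), I set $\mathbf{e}:=(\mathbf{m}-\bs{\mu})/R$ and project the optimality identity onto $\mathbf{e}$ to obtain $\sum_j\langle\mathbf{g}_j,\mathbf{e}\rangle=0$.

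The crux is a sharp lower bound on $\langle\mathbf{g}_j,\mathbf{e}\rangle$ for the ``near'' indices. Writing $b_j:=\|\mathbf{m}-\hat{\bs{\mu}}_j\|_2$ and expanding $\|\hat{\bs{\mu}}_j-\bs{\mu}\|_2^2=b_j^2-2R\langle\mathbf{m}-\hat{\bs{\mu}}_j,\mathbf{e}\rangle+R^2\le\varepsilon^2$, I would derive $\langle\mathbf{g}_j,\mathbf{e}\rangle\ge\tfrac{1}{2R}\bigl(b_j+(R^2-\varepsilon^2)/b_j\bigr)$ and minimize the right-hand side over $b_j>0$, which yields the sharp estimate $\langle\mathbf{g}_j,\mathbf{e}\rangle\ge\sqrt{R^2-\varepsilon^2}/R$. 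It is exactly this quadratic minimization, rather than a cruder triangle-inequality bound, that produces the constant $C_\alpha=(1-\alpha)/\sqrt{1-2\alpha}$ instead of a weaker one. For the remaining indices I would use only $\langle\mathbf{g}_j,\mathbf{e}\rangle\ge-1$. Summing gives $0\ge(1-\alpha)K\,\sqrt{R^2-\varepsilon^2}/R-\alpha K$, hence $\sqrt{R^2-\varepsilon^2}/R\le\alpha/(1-\alpha)$; solving for $R$ gives $R\le C_\alpha\varepsilon$. A minor point is that $\mathbf{m}=\hat{\bs{\mu}}_j$ is impossible for a near index, since that would force $R=\|\hat{\bs{\mu}}_j-\bs{\mu}\|_2\le\varepsilon$, so the strong bound applies to all near indices.

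Next I would pass to the contrapositive and incorporate corruption. The deterministic claim says that the event $\{\|\mathbf{m}-\bs{\mu}\|_2>C_\alpha\varepsilon\}$ forces strictly fewer than $(1-\alpha)K$ of the $\hat{\bs{\mu}}_j$ to lie within $\varepsilon$ of $\bs{\mu}$, i.e.\ more than $\alpha K$ of them to satisfy $\|\hat{\bs{\mu}}_j-\bs{\mu}\|_2>\varepsilon$. Because at most $K-|J|\le\tau K$ indices lie outside $J$, at least $(\alpha-\tau)K$ of these far points must have indices in $J$. Setting $V_j:=\mathbf{1}\{\|\hat{\bs{\mu}}_j-\bs{\mu}\|_2>\varepsilon\}$, the event therefore implies $\sum_{j\in J}V_j\ge(\alpha-\tau)K$, which is a nonvacuous count since $\tau<(\alpha-p)/(1-p)<\alpha$.

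Finally I would apply a Chernoff bound. For $j\in J$ the $V_j$ are independent with $\E[V_j]=\pr{\|\hat{\bs{\mu}}_j-\bs{\mu}\|_2>\varepsilon}\le p$, so the upper-tail bound for sums of independent Bernoulli-dominated variables applies, with rate function precisely $\psi(a,p)=(1-a)\log\tfrac{1-a}{1-p}+a\log\tfrac{a}{p}$. In the worst case $|J|=(1-\tau)K$ the threshold fraction is $(\alpha-\tau)K/|J|=(\alpha-\tau)/(1-\tau)$, and this exceeds $p$ exactly because $\tau<(\alpha-p)/(1-p)$, so the bound is valid and gives $\pr{\sum_{j\in J}V_j\ge(\alpha-\tau)K}\le e^{-(1-\tau)K\,\psi((\alpha-\tau)/(1-\tau),\,p)}$, the asserted estimate. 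I would close by noting monotonicity: extra clean observations ($|J|>(1-\tau)K$) only decrease the deviation probability, so the minimal $|J|$ is the worst case. I expect the deterministic lemma to be the main obstacle, specifically extracting the sharp constant through the quadratic minimization and correctly handling the subgradient at coincidences $\mathbf{m}=\hat{\bs{\mu}}_j$.
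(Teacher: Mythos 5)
Note first that the paper itself contains no proof of this lemma: it is imported as a black box from \citet[Remark 3.1(a)]{MinskerGeometricMoM} (in that paper it follows from Lemma 2.1 together with a binomial Chernoff bound). So your proposal should be judged against Minsker's original argument, and it reconstructs that argument correctly and essentially identically. Your deterministic core is sound: the subgradient optimality condition $\sum_j \mathbf{g}_j=\mathbf{0}$, the projection onto $\mathbf{e}=(\mathbf{m}-\bs{\mu})/R$, and the AM--GM minimization of $b_j+(R^2-\varepsilon^2)/b_j$ give $\langle\mathbf{g}_j,\mathbf{e}\rangle\ge\sqrt{R^2-\varepsilon^2}/R$ for near points, which is exactly the $\sqrt{1-\varepsilon^2/R^2}$ angle bound Minsker obtains via directional derivatives, and solving $(1-\alpha)\sqrt{R^2-\varepsilon^2}/R\le\alpha$ does recover the sharp constant $C_\alpha=(1-\alpha)/\sqrt{1-2\alpha}$. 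Your treatment of coincidences $\mathbf{m}=\hat{\bs{\mu}}_j$ is right, and the probabilistic step is also correct: the contrapositive counting forces $\sum_{j\in J}V_j\ge(\alpha-\tau)K$, the $V_j$ for $j\in J$ are independent and Bernoulli-dominated with mean at most $p$, and the condition $\tau<(\alpha-p)/(1-p)$ is precisely what makes the threshold fraction $(\alpha-\tau)/(1-\tau)$ exceed $p$, so the Chernoff--Hoeffding bound with the relative-entropy rate $\psi$ applies and yields the stated exponent.

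One small repair: your closing monotonicity remark is backwards as stated. For a fixed threshold $t=(\alpha-\tau)K$, the Chernoff exponent $n\mapsto n\,\psi(t/n,p)$ is \emph{decreasing} in $n$ (its derivative is $\log\frac{1-t/n}{1-p}<0$ whenever $t/n>p$), so applying the bound to all of $J$ when $|J|>(1-\tau)K$ produces a strictly weaker exponent than the one claimed in the lemma. The correct and standard fix is a subset trick rather than a monotonicity appeal: pick any $J'\subseteq J$ with $|J'|=(1-\tau)K$; the bad event still forces more than $\alpha K$ far points overall, at most $K-|J'|=\tau K$ of which can lie outside $J'$, so $\sum_{j\in J'}V_j\ge(\alpha-\tau)K$, and Chernoff over exactly $(1-\tau)K$ independent variables gives $e^{-(1-\tau)K\,\psi\left(\frac{\alpha-\tau}{1-\tau},\,p\right)}$ as required. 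With that one-line adjustment (and the usual gloss over integrality of $(1-\tau)K$, which the paper and Minsker also suppress), your proof is complete.
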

The function $k(\tau)$ from the theorem statement can be set to $$k(\tau):=\frac{1}{(1-\tau)\psi\left(\frac{\left(1/2 - \tau\right)^2}{1-\tau}, \frac{1}{2}\left(\frac{1}{2} - \tau\right)^2\right)}$$ such that $K$ is given by
 $$K = K(\delta, \tau) := \lfloor k(\tau)\cdot \log(1/\delta)\rfloor + 1 = \left\lfloor\frac{\log(1/\delta)}{(1-\tau)\psi\left(\frac{\left(1/2 - \tau\right)^2}{1-\tau}, \frac{1}{2}\left(\frac{1}{2} - \tau\right)^2\right)}\right\rfloor + 1.$$
 The second function $c(\tau)$ from the theorem statement can be set to  $$c(\tau):=\frac{2\cdot(3/4 - \tau^2)}{(1/2 - \tau)\sqrt{1/2 - 2\tau^2}\sqrt{(1-\tau)\psi\left(\frac{\left(1/2 - \tau\right)^2}{1-\tau}, \frac{1}{2}\left(\frac{1}{2} - \tau\right)^2\right)}}.$$
It follows that $k(\tau)=\mathcal{O}(1/(\tfrac{1}{2} - \tau)^{2})$ and $c(\tau)=\mathcal{O}(1/(\tfrac{1}{2} - \tau)^{2.5})$ since $\log(1 - x)= \mathcal{O}(x)$ as $x\to 0$.

We can now prove the theorem for $k(\tau)$ and $c(\tau)$ given above.
\begin{proof}
    To simplify notation, let  $$\hat{\bs{\mu}}:= \gmom{K}{\xvect{1},\ldots,\xvect{K\cdot\lfloor n/K\rfloor}} = \Gmed(\hat{\bs{\mu}}_1,\ldots,\hat{\bs{\mu}}_K).$$ For theoretical simplicity, we prove the theorem for $\hat{\bs{\mu}}$ with $K$ blocks of equal block size $\lfloor n/K\rfloor$ 

    The main step of this proof is applying Lemma \ref{lemma:minsker_aggregation} to the block means $\hat{\bs{\mu}}_1,\ldots,\hat{\bs{\mu}}_K$. We start by fixing $\alpha, p$ and $\varepsilon$ in the Lemma. Consider the following choices that depend on the corruption parameter $\tau$: \begin{align*}
        p(\tau):=&\frac{1}{2}\left(\frac{1}{2} - \tau\right)^2,\\
        \alpha(\tau):=&2p(\tau) + \tau = \tau^2 + \frac{1}{4},\\
        \varepsilon(\tau):=&\sqrt{\frac{2K\tr(\mathbf{\Sigma})}{n\,p(\tau)}}.
    \end{align*}
    It remains to verify that these choices can satisfy the conditions in Lemma \ref{lemma:minsker_aggregation}. To choose the set $J$, first note that $K(\delta,\cdot)$ is an increasing function. 
   By assumption, at most $\tau K(\delta,0)$ samples are corrupted (since $17 \leq 1/\psi(1/4, 1/8)$). So, the proportion of corrupted blocks is at most $$(\tau K(\delta,0))/K(\delta,\tau) = \tau (K(\delta,0)/K(\delta,\tau))\leq \tau \cdot 1 = \tau.$$ Therefore, we can set $J$ to be the set of uncorrupted blocks.

    To show the probabilistic bound for all blocks $j\in J$, we assume w.l.o.g. that $j=1$. Using the fact that $\lfloor n/K\rfloor^{-1}\leq 2K/n$ due to $K\leq n/2$, we find \begin{multline*}\ex{\|\hat{\bs{\mu}}_1 - \bs{\mu}\|_2^2} = \frac{1}{\lfloor n/K\rfloor ^2}\sum_{i,j=1}^{\lfloor n/K\rfloor}\ex{(\xvect{i} - \bs{\mu})^T(\xvect{j} - \bs{\mu})} = \\
    \frac{1}{\lfloor n/K\rfloor ^2} \sum_{i=1}^{\lfloor n/K\rfloor}\ex{(\xvect{i} - \bs{\mu})^T(\xvect{i} - \bs{\mu})} = \frac{\ex{\|\xrandvect - \bs{\mu}\|_2^2}}{\lfloor n/K\rfloor}\leq \frac{2K}{n}\tr(\mathbf{\Sigma}).\end{multline*} The probabilistic bound now follows from Chebycheff's inequality, where everything but $p(\tau)$ cancels.

    For the second condition, check that $$\frac{\alpha(\tau) - p(\tau)}{1-p(\tau)}=\frac{2\,p(\tau) + \tau - p(\tau)}{1-p(\tau)} = \frac{p(\tau)}{1-p(\tau)} + \frac{\tau}{1-p(\tau)}> 0 + \frac{\tau}{1}=\tau.$$

    By Lemma \ref{lemma:minsker_aggregation}, we have established \begin{equation}\label{eq:gmom_concentration_proof_v3_1}
        \pr{\|\hat{\bs{\mu}} - \bs{\mu} \|_2 > C_{\alpha(\tau)}\varepsilon(\tau)} \overset{\ref{lemma:minsker_aggregation}}{\leq}
        e^{-K(1-\tau)\psi\left(\frac{\alpha(\tau) - \tau}{1 - \tau}, p(\tau)\right)}.
    \end{equation}

    We start by simplifying the exponent in the right hand side of (\ref{eq:gmom_concentration_proof_v3_1}) for our choice of $K, \alpha(\tau)$ and $p(\tau)$.  We drop the dependency on $\tau$ for simplicity. First, note that $$K=\left\lfloor\frac{\log(1/\delta)}{(1-\tau)\psi\left(\frac{2p}{1-\tau},p\right)}\right\rfloor + 1,$$ which allows the following simplifications: \begin{multline*}
        K(1-\tau)\psi\left(\frac{\alpha - \tau}{1 - \tau}, p\right) = \left(\left\lfloor\frac{\log(1/\delta)}{(1-\tau)\psi\left(\frac{2p}{1-\tau},p\right)}\right\rfloor + 1\right)(1-\tau)\psi\left(\frac{2p}{1 - \tau}, p\right) \\
        \overset{\text{for some }c\in[0,1)}{=}\left(\frac{\log(1/\delta)}{(1-\tau)\psi\left(\frac{2p}{1-\tau},p\right)} - c + 1\right)(1-\tau)\psi\left(\frac{2p}{1 - \tau}, p\right) = \\
        \log(1/\delta) + (1-c)(1-\tau)\psi\left(\frac{2p}{1 - \tau}, p\right) \overset{c \in[0,1)}{\geq} \log(1/\delta) + 0 = \log(1/\delta).
    \end{multline*}
    Since the negative of the initial term is the exponent, we can bound the right hand side of (\ref{eq:gmom_concentration_proof_v3_1}) by \begin{equation*}
    e^{-K(1-\tau)\psi\left(\frac{\alpha - \tau}{1 - \tau}, p\right)}\leq e^{-\log(1/\delta)}=e^{\log(\delta)}=\delta.\end{equation*}
    All that remains is to simplify $C_\alpha\varepsilon$: \begin{multline*}
        C_\alpha\varepsilon = C_\alpha\sqrt{\frac{2K\tr(\mathbf{\Sigma})}{np}} = \frac{C_\alpha\sqrt{2}}{\sqrt{p}\sqrt{(1-\tau)\psi\left(\frac{\alpha - \tau}{1 - \tau}, p\right)}}\cdot
        \sqrt{ K\cdot (1-\tau)\psi\left(\frac{\alpha - \tau}{1 - \tau}, p\right)} \cdot\sqrt{\frac{\tr(\mathbf{\Sigma})}{n}}\leq\\
        c(\tau)\sqrt{\left(\frac{\log(1/\delta)}{(1-\tau)\psi\left(\frac{2p}{1-\tau},p\right)} + 1\right)\cdot (1-\tau)\psi\left(\frac{2p}{1 - \tau}, p\right)}\cdot \sqrt{\frac{\tr(\mathbf{\Sigma})}{n}} = \\
        c(\tau)\sqrt{\log(1/\delta) + (1-\tau)\psi\left(\frac{2p}{1 - \tau}, p\right)}\cdot\sqrt{\frac{\tr(\mathbf{\Sigma})}{n}} \overset{\text{First term of }\psi\text{ negative}}{\leq}\\
        c(\tau)\sqrt{\log(1/\delta) + (1-\tau)\frac{2p}{1- \tau}\log\left(\frac{2p}{(1-\tau)p}\right)}\cdot\sqrt{\frac{\tr(\mathbf{\Sigma})}{n}} \overset{p\leq 1}{\leq} \\
        c(\tau)\sqrt{\log(1/\delta) + 2\log\left(\frac{2}{1-\tau}\right)}\cdot\sqrt{\frac{\tr(\mathbf{\Sigma})}{n}} = c(\tau)\sqrt{\log\left(\frac{4}{(1-\tau)^2\cdot \delta}\right)}\cdot\sqrt{\frac{\tr(\mathbf{\Sigma})}{n}}.
    \end{multline*}
\end{proof}

\section{Proof of Corollary 3.2}

{
\renewcommand{\thetheorem}{3.2}
\begin{corollary}
    Let $\xvect{1},\ldots,\xvect{n}\in\mathbb{R}^p$ be independent samples from an exponential family with parameter $\bs{\theta}_0\in\mathbb{R}^r$. Assume that the following regularity conditions are met: \begin{itemize}
        \item[1)] The conditions from Proposition 2 in \cite{YuScoreMatchNonNeg} hold, i.e. the exponential family satisfies the basic requirements for score matching and, if its support is restricted, the dampening function $\mathbf{h}$ satisfies regularity assumptions. 
        \item[2)] The exponential family satisfies (A1) and (A2) from Section \ref{suppSec:pd_conditions} of this supplement.
        \item[3)] $\mathbf{\Gamma}(\xrandvect)$ and $\mathbf{g}(\xrandvect)$ have finite second moments when $\xrandvect$ comes from the distribution indexed by $\bs{\theta}_0$
    \end{itemize}
    Allow for up to $n_c$ samples to be arbitrarily corrupted, where $n_c=o(n)$. Then, there exists a sequence $K=K(n_c)$ such that the robust score matching estimator $\hat{\bs{\theta}}(K(n_c))$ converges against $\bs{\theta}_0$ in probability when $n\to\infty$. 
\end{corollary}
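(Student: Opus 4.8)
The plan is to reduce the statement to the concentration bound of Theorem~\ref{thm:gmom_concentration}, applied separately to $\hat{\mathbf{\Gamma}}_K$ and $\hat{\mathbf{g}}_K$, and then to pass to the estimator by the continuous mapping theorem. Recall first that the population score matching loss equals $\tfrac12\bs{\theta}^\top\Gammanaught\bs{\theta}-\gnaught^\top\bs{\theta}$ up to an additive constant and, under condition~(1), is uniquely minimized at $\bs{\theta}_0$; a positive semidefinite quadratic form has a unique minimizer only if it is positive definite, so $\Gammanaught$ is positive definite and $\bs{\theta}_0=\Gammanaught^{-1}\gnaught$. Since $\hat{\bs{\theta}}(K)=\hat{\mathbf{\Gamma}}_K^{-1}\hat{\mathbf{g}}_K$ whenever $\hat{\mathbf{\Gamma}}_K$ is invertible, which holds almost surely by Lemma~\ref{lemma:gamma_pos_def} under conditions~(A1)--(A2), it suffices to exhibit a sequence $K=K(n_c)$ along which $\hat{\mathbf{\Gamma}}_{K}\to\Gammanaught$ in Frobenius norm and $\hat{\mathbf{g}}_{K}\to\gnaught$ in Euclidean norm, both in probability.

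Next I would fix a corruption parameter $\tau_0\in(0,\tfrac12)$ and construct a confidence sequence $\delta_n\to 0$. The delicate point is to reconcile three competing demands on $\delta_n$: it must vanish so that the failure probability does; $\log(1/\delta_n)$ must be $o(n)$ so that the radius in \eqref{eqn_appendix:thm_gmom_concentration}, which scales like $\sqrt{\log(1/\delta_n)/n}$, vanishes; and $\delta_n$ must be small enough that the corruption budget $(\lfloor 17\log(1/\delta_n)\rfloor+1)\tau_0$ of Theorem~\ref{thm:gmom_concentration} is at least $n_c$. Setting $\log(1/\delta_n):=\max\bigl((\lceil n_c/\tau_0\rceil-1)/17,\ \log n\bigr)$ meets all three: the $\log n$ term forces $\delta_n\to 0$, the assumption $n_c=o(n)$ together with $\log n=o(n)$ gives $\log(1/\delta_n)=o(n)$, and by construction the budget covers $n_c$ arbitrary corruptions. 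The induced block count $K_n:=K(\delta_n,\tau_0)=\lfloor k(\tau_0)\log(1/\delta_n)\rfloor+1$ is then $o(n)$, so the constraint $K_n\le n/2$ and the remaining hypotheses of Theorem~\ref{thm:gmom_concentration} hold for all large $n$.

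With this sequence in hand I would apply Theorem~\ref{thm:gmom_concentration} twice: to the i.i.d.\ vectors $\mathbf{\Gamma}(\xvect{i})\in\mathbb{R}^{r^2}$ with mean $\Gammanaught$, and to $\mathbf{g}(\xvect{i})\in\mathbb{R}^r$ with mean $\gnaught$. Condition~(3) supplies the finite second moments (finite trace of the respective covariance) that the theorem requires. Because both GMoMs use the same blocking of the same data, the identical set of at most $\tau_0 K_n$ blocks is corrupted in each, so a union bound shows that with probability at least $1-2\delta_n$ both $\|\hat{\mathbf{\Gamma}}_{K_n}-\Gammanaught\|_2\le B_n^{\mathbf{\Gamma}}$ and $\|\hat{\mathbf{g}}_{K_n}-\gnaught\|_2\le B_n^{\mathbf{g}}$ hold, with $B_n^{\mathbf{\Gamma}},B_n^{\mathbf{g}}\to 0$ by the previous paragraph. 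Since also $\delta_n\to 0$, this yields $\hat{\mathbf{\Gamma}}_{K_n}\to\Gammanaught$ and $\hat{\mathbf{g}}_{K_n}\to\gnaught$ in probability.

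Finally, because $\Gammanaught$ is positive definite, matrix inversion is continuous at $\Gammanaught$, so the continuous mapping theorem gives $\hat{\mathbf{\Gamma}}_{K_n}^{-1}\to\Gammanaught^{-1}$ in probability; combining this with $\hat{\mathbf{g}}_{K_n}\to\gnaught$ through the jointly continuous map $(\mathbf{A},\mathbf{b})\mapsto\mathbf{A}^{-1}\mathbf{b}$ yields $\hat{\bs{\theta}}(K_n)=\hat{\mathbf{\Gamma}}_{K_n}^{-1}\hat{\mathbf{g}}_{K_n}\to\Gammanaught^{-1}\gnaught=\bs{\theta}_0$ in probability, as claimed. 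I expect the construction of $\delta_n$ (equivalently $K_n$) balancing the three constraints above to be the main obstacle; once it is in place, the remaining steps are routine invocations of Theorem~\ref{thm:gmom_concentration}, Lemma~\ref{lemma:gamma_pos_def}, and the continuous mapping theorem.
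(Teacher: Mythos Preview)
Your proposal is correct and follows essentially the same route as the paper: apply Theorem~\ref{thm:gmom_concentration} to $\hat{\mathbf{\Gamma}}_K$ and $\hat{\mathbf{g}}_K$ separately, verify the radii vanish because $\log(1/\delta_n)=o(n)$, and conclude via the continuous mapping theorem using $\bs{\theta}_0=\Gammanaught^{-1}\gnaught$. The only substantive difference is how you force $\delta_n\to 0$: the paper first argues that one may assume $n_c\to\infty$ without loss of generality (replacing a bounded $n_c$ by $M+\log n$) and then simply sets $\delta(n_c)=\exp\bigl(-(\lceil n_c/\tau_0\rceil-1)/17\bigr)$, whereas you build the $\log n$ term directly into $\log(1/\delta_n)$ via a maximum. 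Both devices accomplish the same thing; yours is arguably cleaner because it avoids the separate WLOG step, at the cost of making $K$ depend on $n$ and not only on $n_c$ as the corollary's notation suggests (the paper's WLOG step has the same hidden dependence). One small remark: your appeal to Lemma~\ref{lemma:gamma_pos_def} for almost-sure invertibility of $\hat{\mathbf{\Gamma}}_{K_n}$ is not quite applicable once samples are corrupted, since the lemma assumes i.i.d.\ data; but this is harmless, because $\hat{\mathbf{\Gamma}}_{K_n}\to\Gammanaught$ in probability with $\Gammanaught$ positive definite already gives invertibility with probability tending to one, which is all the continuous mapping argument needs.
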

\addtocounter{theorem}{-1}
}

\begin{proof}
    Assume without loss of generality that $n_c\to\infty$ as $n\to\infty$. Should the true number of corrupt samples be bounded by $M<\infty$, it certainly holds true that at most $n_c := M + \log(n)$ samples have been corrupted. Hence, the theorem assumptions are also satisfied for this larger $n_c$, which diverges as $n\to\infty$. 

    Fix some $0 < \tau_0 < 1/2$ independent of $n$. Setting $\delta(n_c) := \exp(- (\lceil n_c/\tau_0\rceil - 1)/17)$, it holds that \begin{equation*}
        (\lfloor 17\cdot\log(1/\delta(n_c))\rfloor + 1)\tau_0 = (\lfloor \lceil n_c/\tau_0\rceil - 1\rfloor + 1) \tau_0 = \lceil n_c/\tau_0\rceil \tau_0 \geq n_c,
    \end{equation*}so in other words $(\delta(n_c), n_c, \tau_0)$ satisfy the assumption of Theorem 3.1. We set $K(n_c) := \lfloor k(\tau_0)\log(1/\delta(n_c))\rfloor + 1$ in line with Theorem 3.1. By our choice of $\delta(n_c)$, we have $K(n_c) = \lfloor \frac{k(\tau_0)}{17} (\lceil n_c/\tau_0\rceil - 1)\rfloor + 1 = o(n)$ by the assumption $n_c=o(n)$. Consequently, $K(n_c)\leq n/2$ for $n$ large enough, which is the last assumption of Theorem 3.1 to check.

    Denoting by $\mathbf{X}$ the $n\times p$ matrix having $\xvect{i}$ as the i-th row and \begin{equation*}B(n, \delta, \mathbf{\Sigma}) := c(\tau_0)\sqrt{\log\left(\frac{4}{(1-\tau_0)^2}\frac{1}{\delta}\right)\frac{\tr(\mathbf{\Sigma})}{n}},\end{equation*} Theorem 3.1 guarantees that \begin{align}\label{eq:cor32_p1}
        \pr{\|\hat{\mathbf{\Gamma}}_{K(n_c)}(\mathbf{X}) - \mathbf{\Gamma}_0\|> B(n, \delta(n_c), \Sigma_\mathbf{\Gamma})} &\leq\delta(n_c), \\
        \pr{\|\hat{\mathbf{g}}_{K(n_c)}(\mathbf{X}) - \mathbf{g}_0\|> B(n, \delta(n_c), \Sigma_\mathbf{g})} &\leq\delta(n_c),
    \end{align} where $\mathbf{\Sigma}_\mathbf{\Gamma}$ denotes the variance of $\mathbf{\Gamma}(\xrandvect)$ and $\mathbf{\Sigma}_\mathbf{g}$ that of $\mathbf{g}(\xrandvect)$ when $\xrandvect$ is distributed according to $\bs{\theta}_0$.

    Since  $\log(1/\delta(n_c)) = (\lceil n_c/\tau_0\rceil - 1)/17 = o(n)$, we have that $B(n, \delta(n_c), \Sigma_\mathbf{\Gamma})$ and $B(n, \delta(n_c), \Sigma_\mathbf{g})$ converge to zero as $n\to\infty$. Also, since we assumed without loss of generality that $n_c\to\infty$ as $n\to\infty$, we have that $\delta(n_c)\to 0$ when $n\to\infty$. These observations together with \eqref{eq:cor32_p1} imply that \begin{align}
        \hat{\mathbf{\Gamma}}_{K(n_c)}(\mathbf{X}) &\xrightarrow{P} \mathbf{\Gamma}_0, & \hat{\mathbf{g}}_{K(n_c)}(\mathbf{X}) &\xrightarrow{P} \mathbf{g}_0
    \end{align} in probability as $n\to\infty$. As matrix inversion and multiplication are continuous, we have that \begin{multline*}
         \hat{\bs{\theta}}(K(n_c)) := \underset{\bs{\theta}\in\Omega}{\argmin}\; \tfrac{1}{2}\bs{\theta}^\top \hat{\mathbf{\Gamma}}_{K(n_c)}(\mathbf{X}) \bs{\theta} - \bs{\theta}^\top \hat{\mathbf{g}}_{K(n_c)}(\mathbf{X}) \overset{(*1)}{=} \\
         \hat{\mathbf{\Gamma}}_{K(n_c)}(\mathbf{X})^{-1} \hat{\mathbf{g}}_{K(n_c)}(\mathbf{X}) \quad\xrightarrow{P}\quad \mathbf{\Gamma}_0^{-1}\mathbf{g}_0 \overset{(*2)}{=} \bs{\theta}_0,
    \end{multline*} where the minimizer in equation $(*1)$ exists almost surely and is given by the matrix inversion formula because of assumption 2) in the corollary  statement above, and equation $(*2)$ holds because of assumption 1) in the corollary statement.
\end{proof}

\section{Choice of \texorpdfstring{$K$}{K}}\label{appendix:choice_k}
\begin{figure}
    \centering
    \includegraphics[scale=0.5]{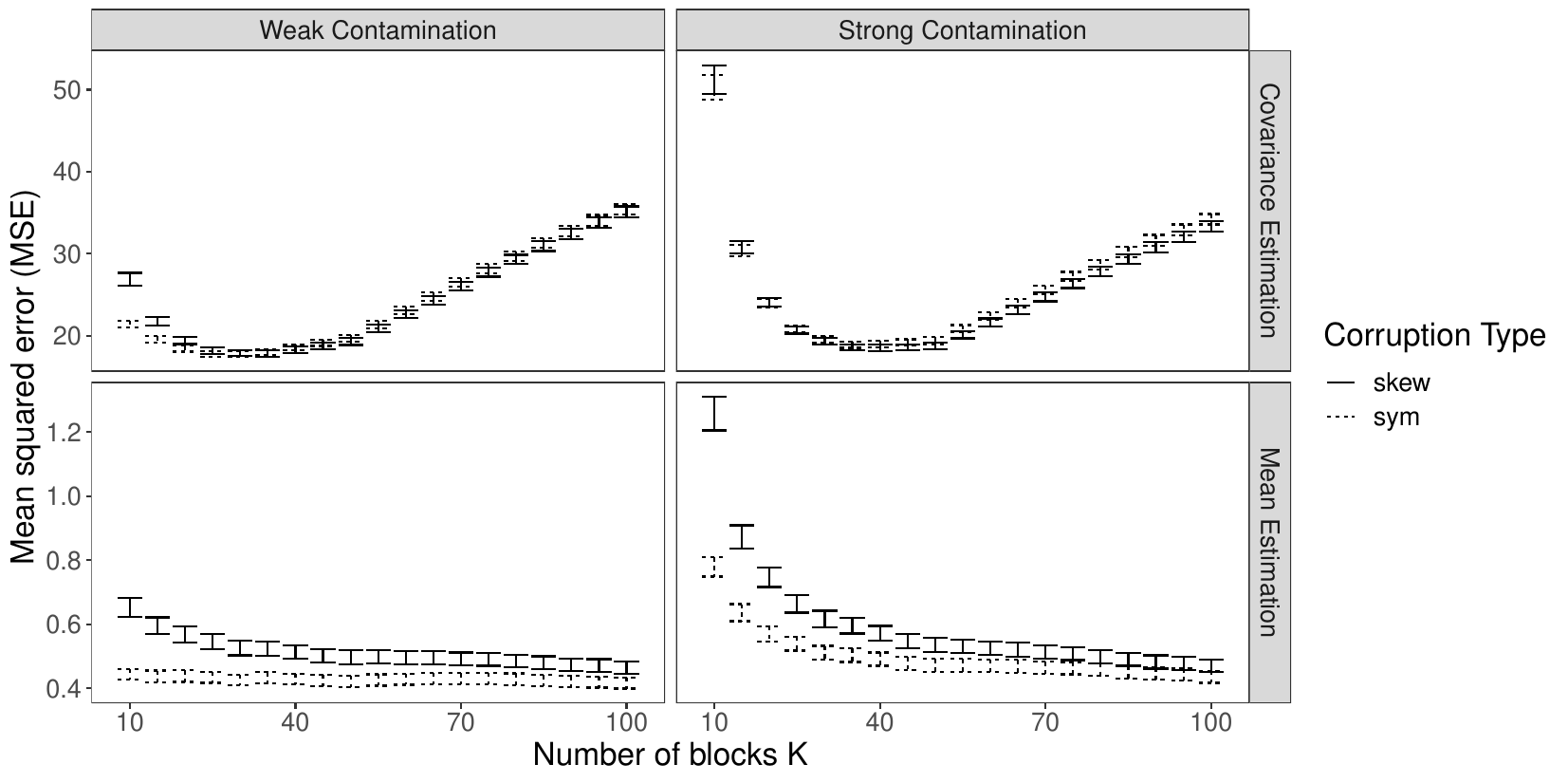}
    \caption{MSE versus number of blocks $K$ for Gaussian covariance (top) and mean (bottom) estimation. $5\%$ of observations were corrupted at varying intensity (weak (left) versus strong (right)) and using different types of corrupting distributions (skewed versus symmetric). }
    \label{fig:K_simulation_corruption}
\end{figure}

To understand what choice for the number of blocks $K$ in the GMoM leads to the best mean squared error (MSE) under contamination, two simulation studies are conducted. In the first, we estimate a Gaussian mean vector and a covariance matrix from contaminated data. Conceptually, this simulation corresponds to estimating $\bs{\Gamma}$ and $\bs{g}$ individually. Figure \ref{fig:K_simulation_corruption} displays the results. In view of the simulation results, we propose $K:= 4\varepsilon n$ as a heuristic.

In the second simulation, we use the robust score matching estimator $\hat{\bs{\theta}}(K)$ from section 3.2 of the main paper to estimate the parameters of a square root graphical model from contaminated data. This simulation sheds some light on how to tune the number of blocks under contamination if one cares about the downstream accuracy of a score matching estimator that incorporates $\bs{\Gamma}$ and $\bs{g}$ estimated through a GMoM procedure. Figure \ref{fig:K_choice_sqr} validates the heuristic $K:= 4\varepsilon n$ proposed earlier.

\subsection{Gaussian mean and covariance estimation}\label{subsec:gaussian_mean_cov}

\textbf{Two estimation problems} are considered, involving a $10$-dimensional Gaussian distribution with mean $\bs{\mu}=0$ and randomly fixed covariance matrix $\mathbf{\Sigma}$. The first problem is finding the population mean $\bs{\mu}$, which is a classic problem of general interest. From a sample $\xvect{1}, \ldots\xvect{n}$ with $n=100$, the mean is estimated simply by \begin{align*}
    \hat{\bs{\mu}}:=\gmom{K}{\xvect{1}, \ldots, \xvect{n}}.
\end{align*} The second problem is estimating the Gaussian covariance matrix $\mathbf{\Sigma}$, a problem that is structurally similar to estimating $\Gammanaught$ in score matching, especially for pairwise interaction models. From a sample $\xvect{1}, \ldots\xvect{100}$ from $N(\mathbf{0}, \mathbf{\Sigma})$, the covariance is estimated by \begin{align*}
    \hat{\mathbf{\Sigma}}:=\gmom{K}{\xvect{1}\cdot{\xvect{1}}^\top,\ldots, \xvect{n}\cdot{\xvect{n}}^\top}.
\end{align*} Both problems together cover a range of distributional properties. When estimating a Gaussian mean, the underlying distribution is symmetric and has light tails. Conversely, when estimating the Gaussian covariance, the underlying Wishart distribution is not symmetric and has heavier tails than the Gaussian. 

\textbf{Four contamination scenarios} were considered, combining two levels of corruption intensity with two types of corrupting distributions. Each scenario corrupted $5$ observations $\xvect{i}$ of the sample $\xvect{1}, \ldots, \xvect{100}$ with independent draws from a corrupting distribution. On the corrupted sample, $\hat{\bs{\mu}}$ and $\hat{\mathbf{\Sigma}}$ were computed. The corruption intensity was varied by setting $\alpha$ to $2$ or $10$ in the following distributions. The first corrupting distribution was Gaussian with mean zero and covariance $\alpha\cdot\diag(\hat{\sigma}_1^2,\ldots,\hat{\sigma}_{10}^2)$, where $\hat{\sigma}_i$ was estimated from $\xvect{1}, \ldots, \xvect{100}$. This is labeled as \emph{sym} in Figure \ref{fig:K_simulation_corruption}. The second corrupting distribution was a Pareto with independent components $(P_1,\ldots, P_{10})$. Each Pareto $P_i$ had its scale parameter chosen such that its $3/4$-th quantile equalled $\alpha$ times the $3/4$-th quantile of $\xvect{1}_i, \ldots, \xvect{100}_i$, and the distribution was shifted such that the lower Pareto cutoff agreed with the population mean of $0$. This is labeled as \emph{skew} in Figure \ref{fig:K_simulation_corruption}.

\textbf{Each simulation output} is comprised of the empirical error $\|\hat{\bs{\mu}} - \mathbf{0}\|_2^2$ and $\|\hat{\mathbf{\Sigma}} - \mathbf{\Sigma}\|_2^2$ for the two estimation problems and four contamination scenarios, respectively, where the number of blocks $K$ was ranged from the breakdown point of $K=10$ to the geometric median $K=100$. Aggregates over $1000$ simulations are reported in Figure \ref{fig:K_simulation_corruption}.  

\textbf{Interpretation:} Increased corruption strength degrades the MSE for $K\lesssim 40$, but has little effect on MSE for large $K$. It seems that once a comfortable distance from the breakdown point is reached, the corruption is irrelevant. The MSE curves share similar shapes per corruption type, showing that the GMoM reacts similarly to different corrupting distributions. The number of blocks $K$ resulting in optimal MSE is $K=n=100$ for mean estimation and $K\approx 30$ for covariance estimation. The different optimum can be explained by the fact that $\hat{\mathbf{\Sigma}}$ with $K=n$ is biased for $\mathbf{\Sigma}$, which degrades the MSE for large $K$ in covariance estimation, while $\hat{\bs{\mu}}$ is unbiased.

\textbf{Take away:} A choice for $K$ optimizing the MSE of the GmoM when a proportion $\varepsilon$ of samples is contaminated should get some distance to the breakdown point of $2\varepsilon n$, however not be to large in order to avoid the bias witnessed in covariance estimation. We propose $K:=4\varepsilon n$. 

\subsection{Estimating a square root graphical model with score matching}
\begin{figure}
     \centering
     \begin{subfigure}{0.45\textwidth}
         \centering
         \includegraphics[scale=0.8]{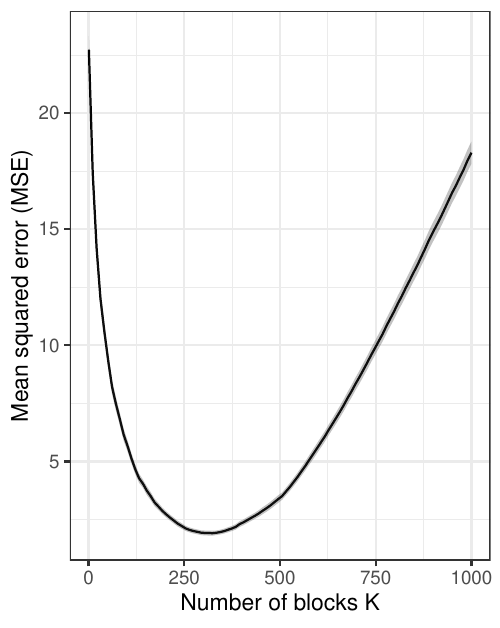}
     \end{subfigure}
     \begin{subfigure}{0.45\textwidth}
         \centering
         \includegraphics[scale=0.8]{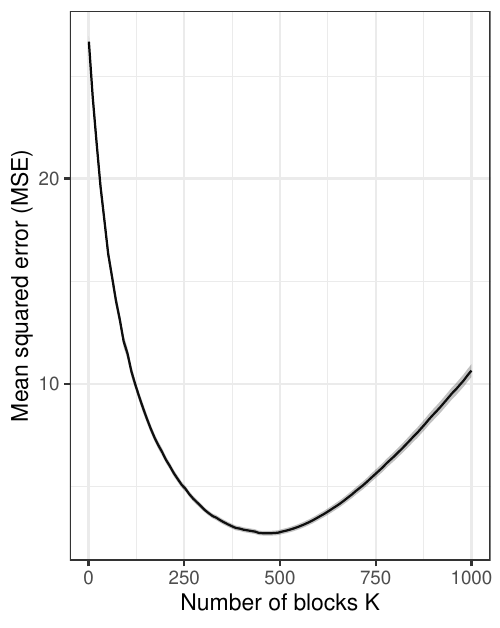}
     \end{subfigure}
        \caption{MSE versus number of blocks $K$ in the robust score matching procedure $\hat{\bs{\theta}}(K)$ for estimating the parameters of a square root graphical model under $5$\% contamination (left) and $10$\% contamination (right).}
        \label{fig:K_choice_sqr}
\end{figure}

\textbf{Data generation:} As an example of an exponential family of interest, we consider the square root graphical model introduced in the main paper. We consider a $m=5$ dimensional model with an interaction matrix $\bs{\Theta}$ and coefficient vector $\bs{\eta}$ being randomly determined. From this model, $n=1000$ samples were created. Of these, $5$\% and $10$\% respectively were contaminated by draws from a Pareto distribution, with parameters chosen such that most contaminated data points remained close to the mean of the uncorrupted data, while some became serious outliers. 

\textbf{Estimation:} Estimates $\hat{\bs{\Theta}}$ and $\hat{\bs{\eta}}$ were obtained using the robust score matching estimator $\hat{\bs{\theta}}(K)$ from section 3.2 of the main paper, where the number of blocks $K$ was varied on a grid from $1$ to $n$, corresponding to the mean and geometric median on the extremes of the spectrum.

\textbf{Simulation target:} For each contaminated data set and choice of $K$, the squared error $\text{SE}:= \|\hat{\bs{\Theta}} - \bs{\Theta}\|_2^2 + \|\hat{\bs{\eta}} - \bs{\eta}\|_2^2$ was computed. Averages and uncertainty estimates over $N=100$ independent Monte Carlo repetitions are presented in Figure \ref{fig:K_choice_sqr}.

\textbf{Take away:} In Figure \ref{fig:K_choice_sqr}, the number of blocks $K$ minimizing the MSE increases as the number of contaminated samples increases (left ($5$\% contamination): $K_\text{opt}\approx 320$, right ($10$\% contamination): $K_\text{opt}\approx 460$). This agrees with the intuition that a higher number of blocks makes the GMoM tolerate a higher number of outliers. The proposed choice $K:= 4 \varepsilon n$ corresponds to $K=200$ and $K=400$ respectively. Both fall in area of low MSE under corruption respectively and are thus suitable choices of the block-size. Still, they don't quite optimize the MSE, highlighting the need for further research into how to optimally tune the geometric median of means under contamination. 

Also note that compared to the simulations in Figure \ref{fig:K_simulation_corruption}, the MSE curves of the score matching estimator resemble that of Gaussian covariance estimation more closely than that of Gaussian mean estimation. This is not surprising, given that the distributions of $\bs{\Gamma}$ and $\bs{g}$ need not be centrally symmetric.

\section{Proof of Theorem 4.3}
We begin with a lemma that extends Theorem 3.1 by allowing for a diagonal multiplier. Set $\mathbf{b}:= \beta\cdot\vect{\mathbf{I}_m}$ with the vectorized $m\times m$ identity matrix $\mathbf{I}_m$ to obtain the diagonal multiplier $\beta$ as it is used in the paper.
\begin{lemma}\label{lemma:gmom_conc_diag_mult}
    Let $\xvect{1},\ldots,\xvect{n}\in\mathbb{R}^p$ be independent samples from a $p$-dimensional distribution with mean $\bs{\mu}$ and covariance $\mathbf{\Sigma}$. Fix a confidence level $\delta\in(0,1]$.  We allow for up to $\tau (\lfloor 17\log(1/\delta)\rfloor + 1)$ samples to be arbitrarily corrupted, where $0\leq\tau< 1/2$. Split the samples into $K$ blocks of equal size $\lfloor\frac{n}{K}\rfloor$, where $K = K(\delta, \tau)$ as in Theorem 3.1. Further, let $c(\tau)$ as in Theorem 3.1.
    
    Assume that $\tr(\mathbf{\Sigma}) >0$, and let $\mathbf{b}\in\mathbb{R}^p$ such that $$\|\mathbf{b}\|_\infty \leq \frac{1}{1 + (\|\bs{\mu}\|_2/\sqrt{2\tr(\mathbf{\Sigma})})\,\sqrt{n/K}}.$$
    
    If for the confidence level $\delta$ it holds that $K\leq n/2$, then \begin{equation*}
        \prob\biggl( \|\left(1 + \mathbf{b}\right)\circ \gmom{K}{\xvect{1},\ldots,\xvect{n}} - \ex{X}\|_\infty > 
        2\cdot c(\tau)\sqrt{\log\left(\frac{4}{(1-\tau)^2}\frac{1}{\delta}\right)\frac{\tr(\mathbf{\Sigma})}{n}}\biggr) \leq \delta,
    \end{equation*}
    where $\circ$ denotes elementwise multiplication.
\end{lemma}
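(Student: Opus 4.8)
The plan is to treat the componentwise multiplier $(1+\mathbf{b})$ as a controlled perturbation of the pure GMoM and to reduce everything to the $\|\cdot\|_2$-concentration already established in Theorem 3.1. Write $\hat{\bs{\mu}} := \gmom{K}{\xvect{1},\ldots,\xvect{n}}$ and abbreviate $B := c(\tau)\sqrt{\log\!\big(\tfrac{4}{(1-\tau)^2}\tfrac{1}{\delta}\big)\tfrac{\tr(\mathbf{\Sigma})}{n}}$. Since the hypotheses on $\delta$, $\tau$, the corruption budget, and $K\le n/2$ coincide with those of Theorem 3.1, that theorem applies verbatim and yields an event $\mathcal{E}$ of probability at least $1-\delta$ on which $\|\hat{\bs{\mu}} - \bs{\mu}\|_2 \le B$ (here $\bs{\mu}=\ex{X}$). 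The remainder of the argument is then purely deterministic on $\mathcal{E}$: it suffices to show that there $\|(1+\mathbf{b})\circ\hat{\bs{\mu}} - \bs{\mu}\|_\infty \le 2B$.

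First I would split off the multiplier via $(1+\mathbf{b})\circ\hat{\bs{\mu}} - \bs{\mu} = (\hat{\bs{\mu}} - \bs{\mu}) + \mathbf{b}\circ\hat{\bs{\mu}}$, so that the triangle inequality gives $\|(1+\mathbf{b})\circ\hat{\bs{\mu}} - \bs{\mu}\|_\infty \le \|\hat{\bs{\mu}}-\bs{\mu}\|_\infty + \|\mathbf{b}\circ\hat{\bs{\mu}}\|_\infty$. Using $\|\mathbf{v}\|_\infty \le \|\mathbf{v}\|_2$ together with the elementwise estimate $\|\mathbf{b}\circ\mathbf{v}\|_\infty \le \|\mathbf{b}\|_\infty\|\mathbf{v}\|_\infty \le \|\mathbf{b}\|_\infty\|\mathbf{v}\|_2$, this is bounded by $B + \|\mathbf{b}\|_\infty\|\hat{\bs{\mu}}\|_2$. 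On $\mathcal{E}$ we have $\|\hat{\bs{\mu}}\|_2 \le \|\hat{\bs{\mu}}-\bs{\mu}\|_2 + \|\bs{\mu}\|_2 \le B + \|\bs{\mu}\|_2$, so the target reduces to the deterministic inequality $\|\mathbf{b}\|_\infty(B+\|\bs{\mu}\|_2) \le B$, equivalently $\|\mathbf{b}\|_\infty \le B/(B+\|\bs{\mu}\|_2)$.

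To close this, set $D := \sqrt{2K\tr(\mathbf{\Sigma})/n}$, so the hypothesis reads $\|\mathbf{b}\|_\infty \le 1/(1+\|\bs{\mu}\|_2/D) = D/(D+\|\bs{\mu}\|_2)$. Since $x\mapsto x/(x+\|\bs{\mu}\|_2)$ is increasing on $(0,\infty)$, it is enough to prove $D\le B$, after which $D/(D+\|\bs{\mu}\|_2)\le B/(B+\|\bs{\mu}\|_2)$ and the chain closes. This comparison $D\le B$ is the crux of the proof and the step I expect to require the real work: cancelling the common factor $\sqrt{\tr(\mathbf{\Sigma})/n}>0$ (where the assumption $\tr(\mathbf{\Sigma})>0$ is used), it is equivalent to $2K \le c(\tau)^2\log\!\big(\tfrac{4}{(1-\tau)^2}\tfrac{1}{\delta}\big)$. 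I would discharge it from the closed forms of $k(\tau)$ and $c(\tau)$ supplied in the proof of Theorem 3.1: a short computation shows $c(\tau)^2/k(\tau) = 4(3/4-\tau^2)^2/\big((1/2-\tau)^2(1/2-2\tau^2)\big) \ge 2$ on $[0,1/2)$ (it equals $18$ at $\tau=0$ and diverges as $\tau\to\tfrac12$). Combined with $K = \lfloor k(\tau)\log(1/\delta)\rfloor + 1 \le k(\tau)\log(1/\delta) + 1$ and the fact that the logarithm on the right exceeds $\log(1/\delta)$ by the strictly positive slack $2\log\!\big(2/(1-\tau)\big)$, the stray $+1$ and remainder terms are comfortably absorbed, yielding $2K \le c(\tau)^2\log(\tfrac{4}{(1-\tau)^2}\tfrac{1}{\delta})$. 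This gives $D\le B$ and completes the lemma.
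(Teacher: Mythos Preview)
Your proposal is correct and follows the same architecture as the paper's proof: the same additive split $(1+\mathbf{b})\circ\hat{\bs{\mu}}-\bs{\mu}=(\hat{\bs{\mu}}-\bs{\mu})+\mathbf{b}\circ\hat{\bs{\mu}}$, the same reduction on the good event to the deterministic inequality $\|\mathbf{b}\|_\infty\le B/(B+\|\bs{\mu}\|_2)$, and the same crux $D:=\sqrt{2K\tr(\mathbf{\Sigma})/n}\le B$.

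The only substantive difference is how $D\le B$ is verified. You plug in the closed forms of $c(\tau)$ and $k(\tau)$ and check $c(\tau)^2/k(\tau)\ge 2$ together with an absorption of the ``$+1$'' via the $2\log(2/(1-\tau))$ slack; this is correct (indeed $c(\tau)^2/k(\tau)=4(3/4-\tau^2)^2/[(1/2-\tau)^2(1/2-2\tau^2)]\ge 8$ on $[0,1/2)$), but a bit computational and the final absorption is stated rather than shown. The paper's route is cleaner: recalling $\varepsilon(\tau)=\sqrt{2K\tr(\mathbf{\Sigma})/(n\,p(\tau))}$ from the proof of Theorem~3.1, one has $D=\sqrt{p(\tau)}\,\varepsilon(\tau)\le \varepsilon(\tau)\le C_{\alpha(\tau)}\varepsilon(\tau)\le t=B$, using only $p(\tau)\le 1$, $C_{\alpha(\tau)}\ge 1$, and the inequality $C_{\alpha(\tau)}\varepsilon(\tau)\le t$ already established at the end of that proof. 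This avoids any explicit manipulation of $c(\tau)$ and $k(\tau)$.
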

\begin{proof}
    To simplify notation, let \begin{align*}
        \hat{\bs{\mu}}& :=\gmom{K}{\xvect{1},\ldots,\xvect{n}}, & t &:= c(\tau)\sqrt{\log\left(\frac{4}{(1-\tau)^2}\frac{1}{\delta}\right)\frac{\tr(\mathbf{\Sigma})}{n}}.
    \end{align*}
     We show the implication \begin{equation}\label{eq:estimator:perf:gmom_plus_diag_concentration_proof_1}
         \|\hat{\bs{\mu}} - \bs{\mu}\|_2 \leq t \implies \| \mathbf{b}\circ \hat{\bs{\mu}}\|_2 \leq t.
     \end{equation}   
    If the left hand side of \eqref{eq:estimator:perf:gmom_plus_diag_concentration_proof_1} holds, we find (recall $t >0$ since $\tr(\mathbf{\Sigma}) > 0 $ )
     \begin{equation}\label{eq:estimator:perf:gmom_plus_diag_concentration_proof_2}
        \|\hat{\bs{\mu}}\|_2 \leq \|\bs{\mu}\|_2 + t = t \left( \frac{\|\bs{\mu}\|_2}{t} + 1\right) \iff \frac{1}{1 + \|\bs{\mu}\|_2 / t}\|\hat{\bs{\mu}}\|_2 \leq t.
    \end{equation}
    We can use \eqref{eq:estimator:perf:gmom_plus_diag_concentration_proof_2} for the right hand side of \eqref{eq:estimator:perf:gmom_plus_diag_concentration_proof_1}. Recalling the definitions of $\alpha(\tau), p(\tau)$ and $\varepsilon(\tau)$ from the proof of Theorem~3.1 as well as the fact that the end of said proof can be rephrased as $C_{\alpha(\tau)}\varepsilon(\tau)\leq t$, we find \begin{multline*}
        \| \mathbf{b}\circ \hat{\bs{\mu}}\|_2 \leq \frac{1}{1 + (\|\bs{\mu}\|_2/\sqrt{2\tr(\mathbf{\Sigma})})\,\sqrt{n/K}}\|\hat{\bs{\mu}}\|_2 \overset{\sqrt{p(\tau)}\leq 1}{\leq} \\
        \frac{1}{1 + \sqrt{p(\tau)}(\|\bs{\mu}\|_2/\sqrt{2\tr(\mathbf{\Sigma})})\,\sqrt{n/K}}\|\hat{\bs{\mu}}\|_2 \overset{C_{\alpha(\tau)}\geq 1}{\leq} \\
        \frac{1}{1 + \|\bs{\mu}\|_2 / (C_{\alpha(\tau)} \varepsilon(\tau))}\|\hat{\bs{\mu}}\|_2 \leq \frac{1}{1 + \|\bs{\mu}\|_2 / t}\|\hat{\bs{\mu}}\|_2\overset{\eqref{eq:estimator:perf:gmom_plus_diag_concentration_proof_2}}{\leq} t.
    \end{multline*}
    This bound proves \eqref{eq:estimator:perf:gmom_plus_diag_concentration_proof_1}
    which allows us to deduce the inclusion of events
    \begin{equation*}
        \{\|\hat{\bs{\mu}} - \bs{\mu} + \mathbf{b}\circ\hat{\bs{\mu}}\|_2 > 2t\} \subset \{\|\hat{\bs{\mu}} - \bs{\mu} \|_2 + \|\mathbf{b}\circ\hat{\bs{\mu}}\|_2 > 2t\} \overset{\eqref{eq:estimator:perf:gmom_plus_diag_concentration_proof_1}}{\subset} 
        \{\|\hat{\bs{\mu}} - \bs{\mu} \|_2 > t\}.
    \end{equation*}
    Hence, by inclusion of events $\{\|\cdot\|_\infty\geq 2t\}\subset\{\|\cdot\|_2\geq 2t\}$ and Theorem 3.1 $$\pr{\|(1 + \mathbf{b})\circ\hat{\bs{\mu}} - \bs{\mu}\|_\infty > 2t}\leq \pr{\|\hat{\bs{\mu}} - \bs{\mu} \|_2 > t} \overset{\text{Thm 3.1}}{\leq} \delta.$$
\end{proof}

Our strategy is now to apply the following theorem from \citet{YuScoreMatchNonNeg}:

\begin{theorem}[Yu et al.]\label{thm:score_m_success_guarantee}
Suppose, $\mathbf{\Gamma}_{0, S_0S_0}$ is invertible and satisfies the irrepresentability condition with incoherence parameter $\alpha$. Assume $$\|\left(\hat{\mathbf{\Gamma}}_K + \beta\cdot\diag(\hat{\mathbf{\Gamma}}_K)\right) - \mathbf{\Gamma}_0\|_\infty<\varepsilon_1, \qquad \|\hat{\mathbf{g}}_K - \mathbf{g}_0\|_\infty<\varepsilon_2,$$ and $d_{\bs{\theta}_0}\varepsilon_1\leq\alpha/(6c_{\mathbf{\Gamma}_0})$. If $$\lambda > \frac{3(2 - \alpha)}{\alpha}\max(c_{\bs{\theta}_0}\varepsilon_1, \varepsilon_2),$$ then it holds that he minimizer $\hat{\bs{\theta}}(K, \beta, \lambda)$ is unique with $S(\hat{\bs{\theta}}(K, \beta, \lambda)) \subset S_0$ and satisfies $$\|\hat{\bs{\theta}}(K, \beta, \lambda) - \bs{\theta}_0\|_\infty\leq\frac{c_{\mathbf{\Gamma}_0}}{2 - \alpha}\lambda.$$
\end{theorem}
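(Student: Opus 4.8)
The plan is to prove this by the \emph{primal-dual witness} (PDW) construction, the standard device for support recovery in $\ell_1$-penalized $M$-estimation. Abbreviate $\mathbf{A} := \hat{\bs{\Gamma}}_{K;\beta} = \hat{\mathbf{\Gamma}}_K + \beta\cdot\diag(\hat{\mathbf{\Gamma}}_K)$, write $S := S_0$ and $S^c := \{1,\ldots,r\}\setminus S_0$, and recall that $\bs{\theta}_0$ is supported on $S$ and solves the population stationarity equation $\mathbf{\Gamma}_0\bs{\theta}_0 + \mathbf{g}_0 = \mathbf{0}$. Since $\mathbf{A}$ is positive definite (as used throughout the paper), the objective in \eqref{eqn:full_estimator_def} is strictly convex, so $\hat{\bs{\theta}}(K,\beta,\lambda)$ is characterized by the subgradient optimality condition $\mathbf{A}\hat{\bs{\theta}} + \hat{\mathbf{g}}_K + \lambda\hat{\mathbf{z}} = \mathbf{0}$ with $\hat{\mathbf{z}}\in\partial\|\hat{\bs{\theta}}\|_1$. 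The goal is to construct a pair $(\hat{\bs{\theta}},\hat{\mathbf{z}})$ solving this system whose primal part is supported on $S$, and then certify it as the unique global minimizer.

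First I would solve the \emph{oracle} subproblem restricted to the true support, $\hat{\bs{\theta}}_S := \argmin_{\bs{\theta}_S}\,\tfrac12\bs{\theta}_S^\top\mathbf{A}_{SS}\bs{\theta}_S + (\hat{\mathbf{g}}_K)_S^\top\bs{\theta}_S + \lambda\|\bs{\theta}_S\|_1$, and set $\hat{\bs{\theta}}_{S^c} := \mathbf{0}$. This furnishes $\hat{\mathbf{z}}_S\in\partial\|\hat{\bs{\theta}}_S\|_1$ with $\mathbf{A}_{SS}\hat{\bs{\theta}}_S + (\hat{\mathbf{g}}_K)_S + \lambda\hat{\mathbf{z}}_S = \mathbf{0}$. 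I then \emph{define} the off-support dual coordinates by $\hat{\mathbf{z}}_{S^c} := -\tfrac1\lambda(\mathbf{A}_{S^cS}\hat{\bs{\theta}}_S + (\hat{\mathbf{g}}_K)_{S^c})$, so the full stationarity condition holds by construction. A standard convex-analytic argument then yields the conclusion: if one can verify \emph{strict dual feasibility} $\|\hat{\mathbf{z}}_{S^c}\|_\infty < 1$, then positive definiteness of $\mathbf{A}_{SS}$ together with the strict inequality on $S^c$ forces $\hat{\bs{\theta}}$ to be the unique minimizer with $S(\hat{\bs{\theta}})\subseteq S_0$.

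The substantive work splits into two perturbation bounds, both driven by the decomposition $\mathbf{A} = \mathbf{\Gamma}_0 + \mathbf{\Delta}$ with $\|\mathbf{\Delta}\|_\infty < \varepsilon_1$ and $\hat{\mathbf{g}}_K = \mathbf{g}_0 + \bs{\rho}$ with $\|\bs{\rho}\|_\infty < \varepsilon_2$. For the $\ell_\infty$ error bound, I would subtract the restricted population relation $\mathbf{\Gamma}_{0,SS}\bs{\theta}_{0,S} = -\mathbf{g}_{0,S}$ from the oracle equation to get $\hat{\bs{\theta}}_S - \bs{\theta}_{0,S} = \mathbf{A}_{SS}^{-1}(-\bs{\rho}_S - \mathbf{\Delta}_{SS}\bs{\theta}_{0,S} - \lambda\hat{\mathbf{z}}_S)$ and bound its $\|\cdot\|_\infty$ via the operator norm $\vertiii{\mathbf{A}_{SS}^{-1}}_{\infty,\infty}$, using $\|\mathbf{\Delta}_{SS}\bs{\theta}_{0,S}\|_\infty\lesssim c_{\bs{\theta}_0}\varepsilon_1$ and $\|\hat{\mathbf{z}}_S\|_\infty\le1$. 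For strict dual feasibility, I would eliminate $\hat{\bs{\theta}}_S = -\mathbf{A}_{SS}^{-1}((\hat{\mathbf{g}}_K)_S + \lambda\hat{\mathbf{z}}_S)$ to obtain $\hat{\mathbf{z}}_{S^c} = \mathbf{A}_{S^cS}\mathbf{A}_{SS}^{-1}\hat{\mathbf{z}}_S + \lambda^{-1}(\text{residual in }\bs{\rho},\mathbf{\Delta})$, and split $\mathbf{A}_{S^cS}\mathbf{A}_{SS}^{-1}$ into the population term $\mathbf{\Gamma}_{0,S^cS}\mathbf{\Gamma}_{0,SS}^{-1}$, whose norm $I_{S_0}\le1-\alpha$ by irrepresentability controls the $\hat{\mathbf{z}}_S$ contribution, plus error terms.

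The main obstacle is controlling the perturbed inverse $\mathbf{A}_{SS}^{-1}$ and the perturbed product $\mathbf{A}_{S^cS}\mathbf{A}_{SS}^{-1}$ in the $\vertiii{\cdot}_{\infty,\infty}$ operator norm starting only from the entrywise bound $\|\mathbf{\Delta}\|_\infty<\varepsilon_1$. This is precisely where the hypotheses are calibrated: bounding $\vertiii{\mathbf{\Delta}_{SS}}_{\infty,\infty}\le d_{\bs{\theta}_0}\varepsilon_1$ (at most $d_{\bs{\theta}_0}$ nonzero entries per relevant row) and invoking a Neumann-series expansion for $(\mathbf{\Gamma}_{0,SS}+\mathbf{\Delta}_{SS})^{-1}$ requires the assumption $d_{\bs{\theta}_0}\varepsilon_1 \le \alpha/(6c_{\mathbf{\Gamma}_0})$, which keeps $\vertiii{\mathbf{A}_{SS}^{-1}}_{\infty,\infty}$ comparable to $c_{\mathbf{\Gamma}_0}$; meanwhile the threshold $\lambda > \tfrac{3(2-\alpha)}{\alpha}\max(c_{\bs{\theta}_0}\varepsilon_1,\varepsilon_2)$ ensures the $\bs{\rho}$- and $\mathbf{\Delta}$-induced residuals in $\hat{\mathbf{z}}_{S^c}$ occupy at most the remaining $\alpha$-margin, so that $\|\hat{\mathbf{z}}_{S^c}\|_\infty < 1$. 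Once strict dual feasibility is certified, the stated bound $\|\hat{\bs{\theta}}-\bs{\theta}_0\|_\infty \le \tfrac{c_{\mathbf{\Gamma}_0}}{2-\alpha}\lambda$ follows by inserting the perturbed-inverse estimate and the $\lambda$-threshold into the error expression for $\hat{\bs{\theta}}_S - \bs{\theta}_{0,S}$.
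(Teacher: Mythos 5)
The paper does not actually prove this statement---it is imported as a quoted result from \citet{YuScoreMatchNonNeg} (itself following the primal-dual witness analyses of Lin et al.\ and Ravikumar et al.) and used as a black box in the proof of Theorem 4.3---and your construction (oracle subproblem on $S_0$, constructed dual $\hat{\mathbf{z}}_{S_0^c}$, strict dual feasibility via the irrepresentability margin $I_{S_0}\leq 1-\alpha$, and Neumann-series control of $\vertiii{\mathbf{A}_{S_0S_0}^{-1}}_{\infty,\infty}$ under $d_{\bs{\theta}_0}\varepsilon_1\leq\alpha/(6c_{\mathbf{\Gamma}_0})$) is exactly the argument of that cited source. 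Your outline is therefore correct and takes essentially the same approach; the only part left implicit is the final constant bookkeeping verifying that the stated thresholds on $\lambda$ and $d_{\bs{\theta}_0}\varepsilon_1$ indeed yield $\|\hat{\mathbf{z}}_{S_0^c}\|_\infty<1$ and the factor $c_{\mathbf{\Gamma}_0}/(2-\alpha)$, which the cited proof carries out in full.
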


Combining Lemma \ref{lemma:gmom_conc_diag_mult} with Theorem \ref{thm:score_m_success_guarantee}, we can prove Theorem 4.3:

\begin{proof}
    Define \begin{align*}
        \varepsilon_1 & := 4 c(\tau)\sqrt{\log\left(\frac{4}{(1-\tau)^2}\frac{1}{\delta}\right)\frac{\tr(\mathbf{\Sigma}_{\mathbf{\Gamma}_0})}{n}}, &
        \varepsilon_2 :=2c(\tau)\sqrt{\log\left(\frac{4}{(1-\tau)^2}\frac{1}{\delta}\right)\frac{\tr(\mathbf{\Sigma}_{g_0})}{n}}.
    \end{align*}
    Treating $\hat{\mathbf{\Gamma}}_K+\beta\diag(\hat{\mathbf{\Gamma}}_K)$ by Lemma~\ref{lemma:gmom_conc_diag_mult} and $\hat{g}_K$ by Theorem~ 3.1 (together with the inclusion of events $\{\|\cdot\|_\infty > \text{const}\}\subset\{\|\cdot\|_2> \text{const}\}$), applying a union bound yields that with probability at least $1 - 2\delta$ \begin{align*}
        \|\hat{\mathbf{\Gamma}}_K + \beta\diag(\hat{\mathbf{\Gamma}}_K)\|_\infty&\leq \varepsilon_1 / 2 < \varepsilon_1, & \|\hat{\mathbf{g}}_K - \mathbf{g}_0\|_\infty&\leq\varepsilon_2/2 < \varepsilon_2.
    \end{align*}
    Furthermore, the growth condition on $n$ ensures that
    $$d_{\bs{\theta}_0} \varepsilon_1 \leq \alpha /(6c_{\mathbf{\Gamma}_0})$$
    and, by construction,
    $$\lambda > 3(2 - \alpha)\max(c_{\bs{\theta}_0}\varepsilon_1, \varepsilon_2)/\alpha.$$
    The claim thus follows from Theorem~\ref{thm:score_m_success_guarantee}.
\end{proof}

\section{Additional simulations}
\subsection{Additional choices for the sample size n in Section 5.1}
Results of the experiment described in Section 5.1 of the main paper for $n=200$ and $n=5000$ are reported in Figure \ref{fig:sqr_roc_appendix}. The figure supports the conclusions from Section 5.1. The classic score matching procedure is almost non-informative in the experiment with $n=5000$ under contamination, highlighting the improvement the GMoM can have on robustness.

\begin{figure}
    \centering
    \includegraphics[width=\linewidth]{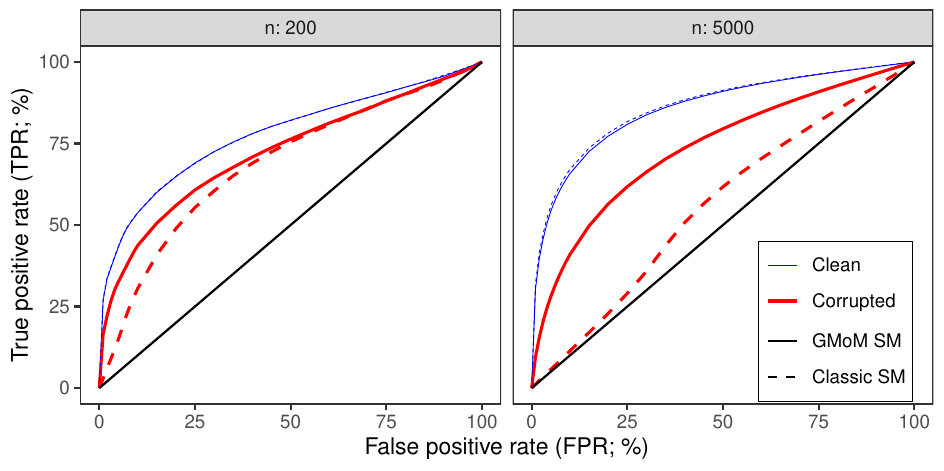}
    \caption{ROC curves for support recovery in the square root model. The experimental setup is the same as for Figure 1 of the main paper; the difference is the values for the sample size $n$.}
    \label{fig:sqr_roc_appendix}
\end{figure}

\subsection{Additional contamination scenarios}
Figure \ref{fig:additional_contamination} shows how the experiments in Section 5.1 of the main paper are affected by changes to the contamination scenario. Concretely, we consider the experiment with $n=1000$. 

For the left hand side of Figure \ref{fig:additional_contamination}, the contamination percentage was increased from $5\%$ to $10\%$, while the contamination distribution was maintained to be Pareto. As expected, when there is more contaminated samples, the GMoM version of score matching outperforms the classic version even more clearly.

For the right hand side of Figure \ref{fig:additional_contamination}, the contamination percentage was set to $5\%$ again, but the contaminating samples were drawn from a Gaussian graphical model. The dependence network of the contaminating distribution was chosen at random independently of the network underlying the uncontaminated sample. To make the contaminated samples not blatantly inconsistent with the true model, their absolute value was taken such that the support constraint of the square root model is satisfied. As the ROC curves show, the GMoM version outperforms the classic version significantly in this contamination setting, albeit the absolute difference is relatively small. In a way, it is surprising the GMoM has a significantly better ROC curve at all, given that it downweighs based on magnitude and not on semantics.

\begin{figure}
    \centering
    \begin{subfigure}{0.5\textwidth}
    \includegraphics[width=\linewidth]{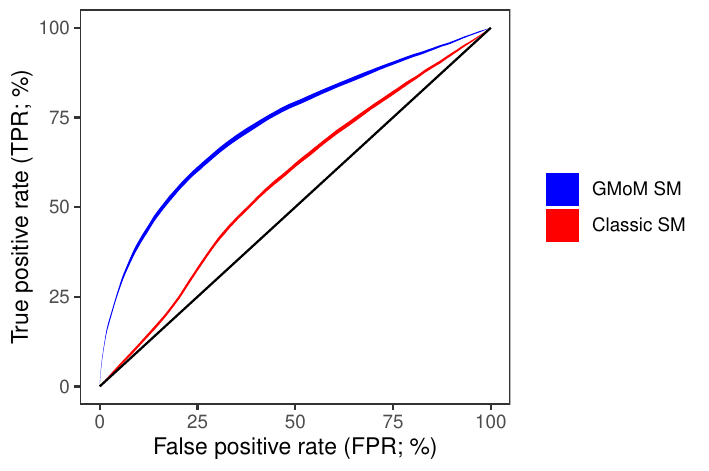} 
    \end{subfigure}\hfill
    \begin{subfigure}{0.5\textwidth}
    \includegraphics[width=\linewidth]{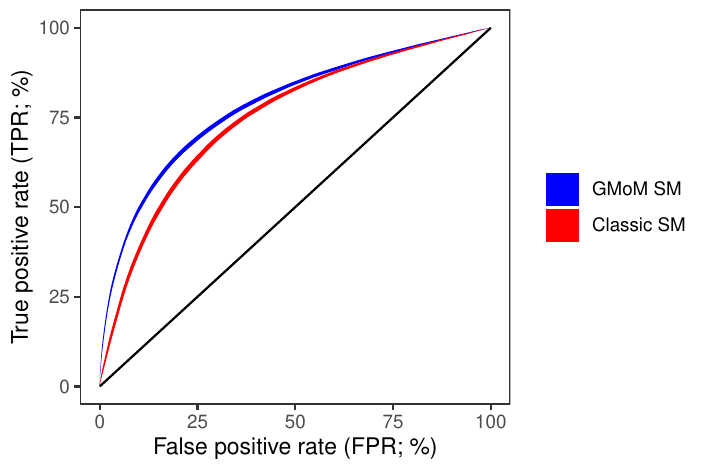}
    \end{subfigure}

    \caption{ROC curves for support recovery in the square root model under contamination only. $95\%$ confidence bands are shown. Left: $10\%$ Pareto contamination; Right: $5\%$ contamination with Gaussian data of different dependence structure.}
    \label{fig:additional_contamination}
\end{figure}

\subsection{Results for Gaussian graphical models}
In this section, we apply regularized score matching (Classic SM) and our extension using the GMoM (GMoM SM) to simulated data from the familiar class of Gaussian graphical models (GGMs). We compare their support recovery performance in terms of ROC to that of GLASSO \citep{glasso}, a widely adopted tool for estimating sparse GGMs.

Again, we consider a scenario where $\varepsilon=5\%$ of the observations are contaminated. Here, contaminated observations are replaced with draws from a Gaussian with iid components, having as variance $10$ times the maximum component variance of the uncontaminated model. To make for a fair comparison, we provide GLASSO with a robust covariance estimate in the contaminated case. Specifically, we use the MAD-Spearman combination theoretically treated for GLASSO in \citep{Loh2018}.

Data was generated from a $m=100$ dimensional Gaussian graphical model on an Erdős–Rényi graph with $100$ edges. In the spirit of the experiments in section 4.1 of \citep{LinHighDimScoreMatching}, we consider the borderline high-dimensional scenario $n=m=100$ samples. The number of blocks for the GMoM SM was set to $4\varepsilon n= 20$, thus again being conservative on uncorrupted data and being adapted to the contamination amount $\varepsilon$ on corrupted data. The diagonal multiplier was not needed since $n\geq m$. The penalty parameter $\lambda$ was varied to cover the entire ROC space. No dampening function $\mathbf{h}$ is needed for the Gaussian, as the domain is unrestricted.

ROC curves based on $100$ independent Monte Carlo simulations are displayed in Figure \ref{fig:sm_vs_glasso}. On uncorrupted data, all three methods have practically identical ROC curves, which is in line with the findings from \cite{LinHighDimScoreMatching}. On corrupted data, classic SM performs worse than the two robust methods, in line with the experiments from the main paper. The robust GLASSO and GMoM SM have very similar ROC curves, with GLASSO performing a bit better for high FPRs and GMoM SM a bit better for low FPRs. To conclude, the experiment shows that GMoM SM is a strong contender for estimating the support of sparse graphical models, especially when a part of the observations has been contaminated.

\begin{figure}
    \centering
    \includegraphics[width=0.9\linewidth]{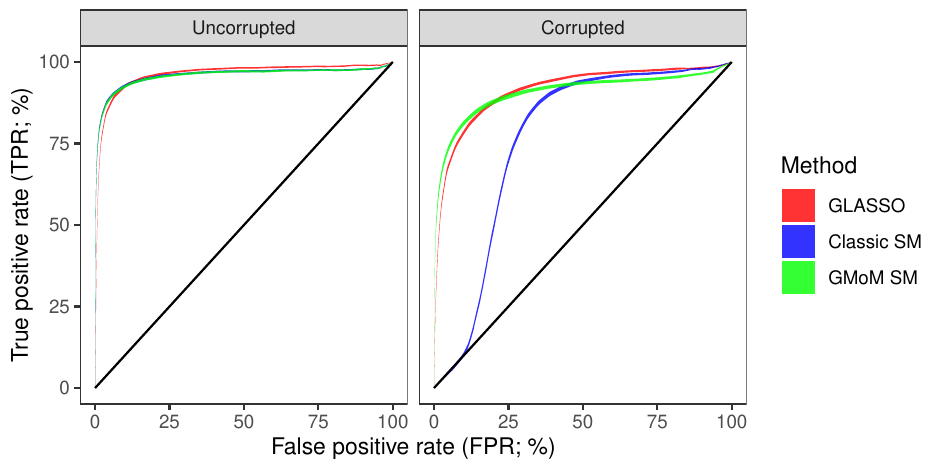}
    \caption{ROC curves for support recovery in the Gaussian graphical model. Right: $5\%$ of observations have been contaminated. Line width of ROC curves shows a $95\%$ confidence band.}
    \label{fig:sm_vs_glasso}
\end{figure}

\end{document}